\newtheorem{theorem}{Theorem}
\newtheorem{remark}{Remark}
\newtheorem{proposition}{Proposition}
\theoremstyle{definition}
\newtheorem{definition}{Definition}
\newtheorem{example}{Example}
\begin{document}
%
\title{Managing Multi-Granular Linguistic Distribution Assessments in Large-Scale Multi-Attribute Group Decision Making}

%

\author{Zhen Zhang \IEEEmembership{Member,~IEEE},  Chonghui Guo and Luis Mart\'{\i}nez \IEEEmembership{Member,~IEEE} 
\thanks{Manuscript received XX-XX-XXXX; revised XX-XX-XXXX; accepted XX-XX-XXXX. This work was partly supported by the National
Natural Science Foundation of China (Nos. 71501023, 71171030), the Funds for Creative Research Groups of China (No. 71421001), the China Postdoctoral Science Foundation (2015M570248) and the Fundamental Research Funds for the Central Universities (DUT15RC(3)003) and by the Research Project TIN2015-66524 and FEDER funds}
\thanks{Z. Zhang and C. Guo are with the Institute of Systems Engineering, Dalian University of Technology, Dalian 116024, China. E-mails:
\href{mailto:zhangzhen@mail.dlut.edu.cn}{zhangzhen@mail.dlut.edu.cn (Z. Zhang)}, \href{mailto:guochonghui@tsinghua.org.cn}{guochonghui@tsinghua.org.cn (C. Guo)}}
\thanks{L. Mart\'{\i}nez is with the Department of Computer Science, University of J\'{a}en, J\'{a}en 23071, Spain (e-mail: \href{mailto:luis.martinez@ujaen.es}{luis.martinez@ujaen.es}).}}

\IEEEpubid{0000--0000/00\$00.00~\copyright~2007 IEEE}


\maketitle

\begin{abstract}
Linguistic large-scale group decision making (LGDM) problems are more and more common nowadays. In such problems a large group of decision makers are involved in the decision process and elicit linguistic information that are usually assessed in different linguistic scales with diverse granularity because of decision makers' distinct knowledge and background. To keep maximum information in initial stages of the linguistic LGDM problems, the use of multi-granular linguistic distribution assessments seems a suitable choice, however to manage such multi-granular linguistic distribution assessments, it is necessary the development of a new linguistic computational approach. In this paper it is proposed a novel computational model based on the use of extended linguistic hierarchies, which not only can be used to operate with multi-granular linguistic distribution assessments, but also can provide interpretable linguistic results to decision makers. Based on this new linguistic computational model, an approach to linguistic large-scale multi-attribute group decision making is proposed and applied to a talent selection process in universities.

\end{abstract}

\begin{IEEEkeywords}
group decision making (GDM), large-scale GDM, multi-granular linguistic information, linguistic distribution assessment.
\end{IEEEkeywords}

%
\IEEEpeerreviewmaketitle
\newcommand{\bea}{\begin{eqnarray}}
\newcommand{\eea}{\end{eqnarray}}
\newcommand{\ben}{\begin{equation}}
\newcommand{\een}{\end{equation}}
\newcommand{\add}{\color{blue}\uwave}
\newcommand{\delete}{\color{red}\sout}

\section{Introduction}\label{sec:1}

Group decision making (GDM) is a common activity occurring in human being's daily life. For a typical multi-attribute group decision making (MAGDM) problem, a group of decision makers are usually required to express their assessments over alternatives with regard to some predefined criteria. Afterwards, the evaluation information is aggregated to form a group opinion, based on which collective evaluation and a ranking of alternatives can be obtained \cite{Palomares13eswa,Pedrycz11book}. Current GDM problems demand quick solutions and decision makers may either doubt or have vague or uncertain knowledge about alternatives; hence they cannot express their assessments with exact numerical values. Consequently a more realistic approach may be to use linguistic assessments instead of numerical values \cite{Rodriguez13ins,Delgado98ins}. In literature, MAGDM problems involving uncertainty are usually dealt with linguistic modeling that implies computing with words (CW) processes to obtain accurate and easily understood results \cite{Zadeh96tfs,Martinez10ijcis}.


Despite a large amount of research conducted on GDM with linguistic information \cite{lu2007,Agell12ins,Dong13tfs,Meng13ins}, there are still some challenges that need to be tackled. One of them is how to deal with GDM problems with large groups under linguistic environment. For traditional GDM problems, only a few number of decision makers may take part in the decision process. In recent years, the increase of technological and societal demands has given birth to new paradigms and means of making large-scale group decisions (such as e-democracy and social networks) \cite{Palomares14tfs}. As a result, the large-scale GDM problems have received more and more attentions from scholars. Large-scale GDM  (LGDM) can be grouped into four categories, i.e., clustering methods in LGDM \cite{Zahir99ejor,Wang14eswa,Liu14infu}, consensus reaching processes in LGDM \cite{Palomares14tfs,Xu15dss}, LGDM methods \cite{Liu15infu,Liu15ejor} and LGDM support systems \cite{Carvalho11jucs,Palomares14kbs}.

For linguistic LGDM problems, one important challenge is how to represent the group's linguistic assessment, especially when anonymity is needed to protect the privacy of decision makers. It seems that the linguistic models and computational processes used in traditional linguistic GDM problems \cite{Rodriguez13ijgs} can be directly extended to linguistic LGDM problems, which may include the linguistic aggregation operator-based approach and the models based on uncertain linguistic terms \cite{Zhang12kbs,Xu14gdn} and hesitant fuzzy linguistic term sets \cite{Rodriguez12tfs,Liao14ins,Liu14ins}. However, in linguistic LGDM problems, the group's assessments usually tend to present a distribution concerning the terms in the linguistic term set used, which can reflect the tendencies of preference from decision makers and provide more information about the collective assessments of alternatives. The linguistic models and computational processes introduced to deal with linguistic information in traditional linguistic GDM problems could imply an oversimplification of the elicited information from the very beginning, thus may lead to the loss and distortion of information. In order to keep the maximum information elicited by decision makers in a group in the initial stages of the decision process, this paper proposes the use of linguistic distribution assessments \cite{Zhang14infu,Dong15kbs} to represent group's linguistic information for linguistic LGDM problems.

Additionally in linguistic GDM problems, multiple sources of information with different degree of knowledge and background may take part in the decision process, which usually implies the appearance and the necessity of multiple linguistic scales (multi-granular linguistic information) to model properly different knowledge elicited by each source of information \cite{Morente-Molinera15kbs}. Different approaches have been introduced in literature not only to model and manage such a type of information but also for computing with it \cite{Herrera00fss,Herrera01tsmcb,Huynh05tsmcb,Chen06caie,Espinilla11ci,Fan10eswa,Dong15gdn}. Therefore in a linguistic LGDM problem, decision makers may use different linguistic term sets to provide the assessments over alternatives. In order to keep maximum information in initial stages of the decision process, the linguistic distribution assessments will be multi-granular linguistic ones. Hence, there is a clear need of dealing with multi-granular linguistic distribution assessments in the decision processes. Moreover, according to the CW scheme \cite{Martinez12ins,Rodriguez13ijgs}, it is also crucial to obtain interpretable final linguistic results to decision makers. Therefore, new models for representing and managing multi-granular linguistic distribution assessments will be developed.

Consequently, the aim of this paper is to introduce a new linguistic computational model which is able to deal with multi-granular linguistic information by keeping the maximum information at the initial stages, removing initial aggregation processes and modeling the information provided by experts with the use of linguistic distribution assessments to obtain a solution set of alternatives by a classical decision approach with specific operators defined for linguistic distribution assessments providing interpretable results.

The remainder of this paper is organized as follows. In Section \ref{sec:2}, some necessary preliminaries for the proposed model are presented. In Section \ref{sec:3}, improved distance measures and ranking approach of linguistic distribution assessments are provided. In Section \ref{sec:4}, a  new linguistic computational model is introduced to deal with multi-granular linguistic distribution assessments. In Section \ref{sec:5}, an approach is developed to deal with large-scale linguistic MAGDM problems using multi-granular linguistic distribution assessments. In Section \ref{sec:6}, an example is given to illustrate the proposed MAGDM approach. Finally, this paper is concluded in section \ref{sec:7}.

\section{Preliminaries}\label{sec:2}
In order to make this paper as self-contained as possible, some related preliminaries are presented in this section. In subsection \ref{sec:2-1}, we review some basic knowledge related to linguistic information and decision making. In subsection \ref{sec:2-2}, how to deal with multi-granular linguistic information is presented. In subsection \ref{sec:2-3}, related concepts about linguistic distribution assessments are provided.
\subsection{Linguistic information and decision making}\label{sec:2-1}

Many aspects of decision making activities in the real world are usually assessed in a qualitative way due to the vague or imprecise knowledge of decision makers. In such cases, the use of linguistic information seems to be a better way for decision makers to express their assessments. To manage linguistic information in decision making, linguistic modeling techniques are needed. In linguistic modeling, the linguistic variable defined by Zadeh \cite{Zadeh75ins1,Zadeh75ins2,Zadeh75ins3} is usually employed to reduce the communication gap between humans and computers. A linguistic variable is a variable whose values are not numbers but words in a natural or artificial language.

To facilitate the assessment process in linguistic decision making, a linguistic term set and its semantics should be chosen in advance. One way to generate the linguistic term set is to consider all the linguistic terms distributed on a scale in a total order \cite{Yager95ijar}. The most widely used linguistic term set is the one which has an odd value of granularity, being triangular-shaped, symmetrical and uniformly distributed its membership functions. A formal description of a linguistic term set can be given below.

Let $S=\{s_0,s_1,\ldots,s_{g-1}\}$ denote a linguistic term set with odd cardinality, the element $s_i$ represents the $i^{th}$ linguistic term in $S$, and $g$ is the cardinality of the linguistic term set $S$. Moreover, for the linguistic term set $S$, it is usually assumed that the midterm represents an assessment of ``approximately 0.5'', with the rest of the terms being placed uniformly and symmetrically around it. Moreover, $S$ should satisfy the following characteristics \cite{Herrera00tfs,Martinez10ijcis}: (1) The set is ordered: $s_i > s_j, \textrm{if}\  i > j$; (2) There is a negation operator: $\textrm{Neg}(s_i)=s_j$, such that $j=g-1-i$; (3) Maximization operator: $\textrm{max}(s_i, s_j) = s_i$, if $s_i \geqslant s_j$; (4) Minimization operator: $\textrm{min}(s_i, s_j) = s_i$, if $s_i \leqslant s_j$.

Different linguistic computational models have been developed for CW \cite{Martinez10ijcis,Rodriguez13ijgs}, such as models based on fuzzy membership functions \cite{Degani88ijar}, symbolic models based on ordinal scales \cite{Xu04ins}, models based on type-2 fuzzy sets \cite{Mendel06tfs}. However, such models sometimes may lead to information loss or lack interpretability. To enhance the accuracy and interpretability of linguistic computational models, Herrera and Mart\'{\i}nez \cite{Herrera00tfs} proposed the 2-tuple linguistic representation model, which is defined as below.
\begin{definition}\label{def-1}
\cite{Herrera00tfs} Let $S=\{s_0,s_1,\ldots,s_{g-1}\}$ be a linguistic term set and $\kappa\in[0,g-1]$ be a value representing the result of a symbolic aggregation operation, then the 2-tuple that expresses the equivalent information to $\kappa$ is obtained with the following function:
\bea\label{delta}
\begin{split}
&\Delta: [0,g-1]\rightarrow S\times [-0.5,0.5)\\
&\Delta(\kappa)=(s_k,\alpha),\\
\end{split}
\eea
with $k={\rm round}(\kappa)$, $\alpha=\kappa-k$, where ``round($\cdot$)" is the usual round operation, $s_k$ has the closest index label to $\kappa$, and $\alpha$ is the value of symbolic translation.
\end{definition}

\begin{definition}\label{def-2}
\cite{Herrera00tfs} Let $S=\{s_0,s_1,\ldots,s_{g-1}\}$ be a linguistic term set and $(s_k,\alpha)$ be a 2-tuple, there exists a function $\Delta^{-1}$, which can transform a 2-tuple into its equivalent numerical value $\kappa\in[0,g-1]$. The transformation function is defined as
\bea
\begin{split}
&\Delta^{-1}: S\times[-0.5,0.5)\rightarrow [0,g-1]\\
&\Delta^{-1}(s_k,\alpha)=k+\alpha=\kappa.\\
\end{split}
\eea
\end{definition}

Based on the above definitions, a linguistic term can be considered as a linguistic 2-tuple by adding the value 0 to it as a symbolic translation, i.e. $s_k\in S \Rightarrow (s_k,0)$. In this paper, the 2-tuple linguistic model will be used as the basic linguistic computational model.

\subsection{Multi-granular linguistic information}\label{sec:2-2}

When multiple decision makers or multiple criteria are involved in a linguistic decision making problem, the assessments concerning the alternatives are usually in the form of multi-granular linguistic information, which is due to the fact that a decision maker who wants to provide precise information may use a linguistic term set with a finer granularity, while a decision maker who is not able to be very precise about a certain domain may choose a linguistic term set with a coarse granularity \cite{Herrera00fss,Morente-Molinera16ins}.

To manage multi-granular linguistic information, different linguistic computational models have been proposed, including models based on fuzzy membership
functions \cite{Jiang08ins,Zhang12kbs}, ordinal models based on a basic linguistic term set \cite{Chen06caie,Herrera00fss,Xu09gdn}, the linguistic hierarchies (LH) model \cite{Herrera01tsmcb}, ordinal models based on hierarchical trees \cite{Huynh05tsmcb}, models based qualitative description spaces \cite{Roselo14infu} and ordinal models based discrete fuzzy numbers \cite{Massanet14ins}. For a systematic review about multi-granular fuzzy linguistic modeling, the readers can refer to \cite{Morente-Molinera15kbs}.

To fuse linguistic information with any linguistic scale, Espinilla \emph{et al.} \cite{Espinilla11ci} introduced an extended linguistic hierarchies (ELH) model based on the LH model. In this paper, the ELH model will be used to handle multi-granular linguistic information. Before introducing the ELH model, we first recall the LH model proposed by Herrera and Mart\'{\i}nez \cite{Herrera01tsmcb}.

A LH is the union of all levels $i$: $LH=\bigcup_i l(i,g(i))$, where each level $i$ of a LH corresponds to a linguistic term set with a granularity of $g(i)$ denoted as: $S^{g(i)}=\{s_0^{g(i)},s_1^{g(i)},\ldots,s_{g(i)-1}^{g(i)}\}$, and a linguistic term set of level $i+1$ is obtained from its predecessor as $l(i,g(i))\rightarrow l(i+1, 2\cdot g(i)-1)$. Based on the LH basic rules, a transformation function $TF_{i'}^{i}$ between any two linguistic levels $i$ and $i'$ of the LH is defined as below.
\begin{definition}\label{LH}
\cite{Herrera01tsmcb} Let $LH=\bigcup_i l(i,g(i))$ be a LH whose linguistic term sets are denoted as $S^{g(i)}=\{s_0^{g(i)},s_1^{g(i)},\ldots,s_{g(i)-1}^{g(i)}\}$, and let us consider the 2-tuple linguistic representation. The transformation function from a linguistic label in level $i$ to a label in level $i'$, satisfying the LH basic rules, is defined as
\bea
TF^{i}_{i'}(s_k^{g(i)},\alpha^{g(i)})=\Delta\left(\dfrac{\Delta^{-1}(s_k^{g(i)},\alpha^{g(i)})\cdot (g(i')-1)}{g(i)-1}\right).
\eea
\end{definition}

The ELH model constructs extended linguistic hierarchies based on the following proposition.

\begin{proposition}\label{multi1}
\cite{Espinilla11ci} Let $\{S^{g(1)},S^{g(2)},\ldots,S^{g(n)}\}$ be a set of linguistic term sets, where the granularity $g(i)$ is an odd value, $i=1,2,\ldots,n$. A new linguistic term set $S^{g(i^*)}$ with $i^*=n+1$ that keeps all the formal modal points of the $n$ linguistic term sets has the minimal granularity:
\begin{equation}\label{lcm}
g(i^*)=LCM(\delta_1,\delta_2,\ldots,\delta_n)+1,
\end{equation}
where $LCM$ is the least common multiple and $\delta_i=g(i)-1$, $i=1,2,\ldots,n$. The set of former modal points of the level $i$ is defined as $FP_{i}=\{fp^0_{i},\ldots,fp^j_{i},\ldots,fp^{2\cdot\delta_{i}}_{i}\}$ and each former modal point $fp^j_{i} \in [0,1]$ is located at $fp^j_{i}=\frac{j}{2\cdot\delta_{i}}$.
\end{proposition}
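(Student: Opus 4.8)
The plan is to translate the phrase ``keeps all the formal modal points'' into an arithmetic divisibility condition on the granularities, and then to recognise the minimal solution as a least common multiple. Throughout I would write $\delta_i=g(i)-1$ for $i=1,\ldots,n$, and let $\delta=g-1$ denote the parameter of an arbitrary candidate linguistic term set $S^{g}$ of odd granularity $g$ (so $\delta$ is even), whose former modal points are $FP_{g}=\{m/(2\delta):m=0,1,\ldots,2\delta\}$. The key observation is that $S^{g}$ keeps all former modal points of $S^{g(i)}$, i.e.\ $FP_i\subseteq FP_{g}$, exactly when every point $fp_i^{\,j}=j/(2\delta_i)$ coincides with some former modal point $m/(2\delta)$ of $S^{g}$.

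First I would establish sufficiency. Suppose $\delta$ is a common multiple of $\delta_1,\ldots,\delta_n$, so that $\delta/\delta_i$ is a positive integer for each $i$. Given any former modal point $fp_i^{\,j}=j/(2\delta_i)$ with $0\le j\le 2\delta_i$, put $m=j\,(\delta/\delta_i)$. Then $m$ is a nonnegative integer, $m\le 2\delta_i\,(\delta/\delta_i)=2\delta$, and $m/(2\delta)=j/(2\delta_i)=fp_i^{\,j}$. Hence $fp_i^{\,j}\in FP_{g}$ for all admissible $i$ and $j$, so such an $S^{g}$ keeps all the formal modal points.

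Next I would prove necessity and minimality simultaneously. Assume $S^{g}$ keeps all the formal modal points. Fixing any level $i$ and applying the hypothesis to the single point $fp_i^{\,1}=1/(2\delta_i)$, there must exist an integer $m_i$ with $1/(2\delta_i)=m_i/(2\delta)$, whence $\delta=m_i\delta_i$ and therefore $\delta_i\mid\delta$. As $i$ was arbitrary, $\delta$ is a common multiple of $\delta_1,\ldots,\delta_n$, so $\delta\ge LCM(\delta_1,\ldots,\delta_n)$. Combined with the sufficiency direction, the smallest admissible value is $\delta=LCM(\delta_1,\ldots,\delta_n)$, giving $g(i^*)=LCM(\delta_1,\ldots,\delta_n)+1$. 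Since each $g(i)$ is odd, each $\delta_i$ is even, so this least common multiple is even and $g(i^*)$ is odd, confirming that $S^{g(i^*)}$ is a legitimate linguistic term set of odd cardinality.

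I expect the only delicate point to be the bookkeeping of the factor $2$ in the definition of the former modal points: one must verify that matching the refined points $j/(2\delta_i)$, rather than the bare modal points $k/\delta_i$, yields the same divisibility requirement $\delta_i\mid\delta$, because the common factor $2$ cancels on both sides of $1/(2\delta_i)=m_i/(2\delta)$. Once this cancellation is noticed, the remainder is simply the standard characterisation of $LCM$ as the minimal common multiple, and the parity remark guarantees that the resulting granularity is odd, as a linguistic term set requires.
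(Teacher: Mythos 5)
This proposition appears in the paper only as a cited preliminary (imported from Espinilla \emph{et al.}, reference \cite{Espinilla11ci}); the paper itself contains no proof of it, so there is no in-paper argument to compare yours against. Judged on its own merits, your proof is correct and complete: the translation of ``keeps all the former modal points'' into the containment $FP_i\subseteq FP_{g}$, the two-sided equivalence of that containment with the divisibility condition $\delta_i\mid\delta$ (sufficiency by exhibiting the integer index $m=j\,\delta/\delta_i$, necessity by testing the single witnessing point $fp_i^{1}=1/(2\delta_i)$), the identification of the minimal admissible $\delta$ with $LCM(\delta_1,\ldots,\delta_n)$ by the standard characterisation of the least common multiple, and the closing parity observation that each $\delta_i$ being even forces $g(i^*)=LCM(\delta_1,\ldots,\delta_n)+1$ to be odd (so the resulting scale is a legitimate term set of odd cardinality) are all sound. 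The factor-of-$2$ cancellation you single out as the delicate point is indeed handled correctly: since both families of modal points carry the same factor $2$ in their denominators, matching the refined points $j/(2\delta_i)$ imposes exactly the divisibility $\delta_i\mid\delta$ and nothing stronger, which is what makes the LCM formula come out without any spurious factor of $2$.
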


Based on Proposition \ref{multi1}, an ELH which is the union of the $n$ levels required by the experts and the new level $l(i^*, g(i^*))$ that keeps all the former modal points to provide accuracy in the processes of CW is denoted by
\begin{equation}\label{ELH}
ELH=\bigcup_{i=1}^{n+1} l(i,g(i)).
\end{equation}

Espinilla \emph{et al.} \cite{Espinilla11ci} defined a transformation function which can transform any pair of linguistic term sets, $i$ and $i'$, in the ELH without loss of information. The basic idea of the transformation function is as follows. First, transform linguistic terms at any level $l(i, g(i))$ in the ELH into those at $l(i^*, g(i^*))$, being $i^*=n+1$, that keeps all the former modal points of the level $i$, by means of $TF^{i}_{i^*}$ without loss of information, and then transform the linguistic terms at $l(i^*, g(i^*))$ in the ELH into any level $l(i', g(i'))$ by means of $TF_{i'}^{i^*}$ without loss of information.

\begin{definition}
 \cite{Espinilla11ci} Assume $i$ and $i'$ be any pair of linguistic term sets in the ELH and $i^*$ is the level $l(n+1, g(n + 1))$ in the ELH, the new extended transformation function $ETF_{i'}^i$ is defined as
\begin{equation}
\begin{split}
&ETF_{i'}^i: l(i, g(i))\rightarrow l(i', g(i'))\\
&ETF_{i'}^i=TF^{i}_{i^*}\circ TF_{i'}^{i^*},
\end{split}
\end{equation}
where $TF^{i}_{i^*}$ and $TF_{i'}^{i^*}$ are the transformation functions as defined in the LH model.
\end{definition}

\subsection{Linguistic distribution assessments}\label{sec:2-3}

In this subsection, some related concepts of linguistic distribution assessments are presented. First, the definition of a linguistic distribution assessment is revised.
\begin{definition}\label{def-4}
\cite{Zhang14infu} Let $S=\{s_0,s_1,\ldots,s_{g-1}\}$ denote a linguistic term set and $\beta_k$ be the symbolic proportion of $s_k$, where $s_k\in S$, $\beta_k\geqslant 0$, $k=0,1,\ldots,g-1$ and $\sum_{k=0}^{g-1}\beta_k=1$, then an assessment $m=\{\langle s_k,\beta_k\rangle|k=0,1,\ldots,g-1\}$ is called a linguistic distribution assessment of $S$, and the expectation of $m$ is defined as a linguistic 2-tuple by $E(m)$, where $E(m)=\Delta\left(\sum\limits_{k=0}^{g-1}k\beta_k\right)$ \footnote{The representation is different from the definition provided in  \cite{Zhang14infu}, but they have the same meaning.}. For two linguistic distribution assessments $m_1$ and $m_2$, if $E(m_1)\geqslant E(m_2)$, then $m_1\geqslant m_2$.
\end{definition}

\begin{remark}
A linguistic distribution assessment can be used to represent the linguistic assessment of a group. Assume that the originality of a research project was assessed by five experts using linguistic terms from a linguistic term set $S=\{s_0,s_1,\ldots,s_4\}$. If the assessments of the five experts were $s_1,s_2,s_1,s_3,s_2$, then the overall assessment could be denoted as a linguistic distribution assessment $\{\langle s_1,0.4\rangle,\langle s_2,0.4\rangle,\langle s_3,0.2\rangle\}$. Based on a linguistic distribution assessment, we not only can roughly know the possible assessment of an alternative in a linguistic way, but also can derive the distribution of each linguistic term, which keeps the maximum information elicited by decision makers in a group.
\end{remark}

Zhang \emph{et al.} \cite{Zhang14infu} developed the weighted averaging operator of linguistic distribution assessments (i.e., DAWA operator), which is defined as follows.
\begin{definition}\label{DAWA}
\cite{Zhang14infu} Let $\{m_1,m_2,\ldots,m_n\}$ be a set of linguistic distribution assessments of $S$, where $m_i=\{\langle s_k,\beta_k^i \rangle |k=0,1,\ldots,g-1\}$, $i=1,2,\ldots,n$, and $w=(w_1,w_2,\ldots,w_n)^{\rm T}$ be an associated weighting vector that satisfies $w_i\geqslant0$ and $\sum_{i=1}^{n}w_i=1$, then the weighted averaging operator of $\{m_1,m_2,\ldots,m_n\}$ is defined as
\bea
{\rm DAWA}_w(m_1,m_2,\ldots,m_n)=\{\langle s_k,\beta_k \rangle| k=0,1,\ldots,g-1\},
\eea
where $\beta_k=\sum_{i=1}^{n}w_i\beta_{k}^{i}$, $k=0,1,\ldots,g-1$.
\end{definition}

The distance measure between two linguistic distribution assessments is also given in \cite{Zhang14infu}, as showed below.
\begin{definition}\label{Dis1}
\cite{Zhang14infu} Let $m_1=\{\langle s_k,\beta_k^1\rangle| k=0,1,\ldots,g-1\}$ and $m_2=\{\langle s_k,\beta_k^2\rangle| k=0,1,\ldots,g-1\}$ be two linguistic distribution assessments of a linguistic term set $S$, then the distance between $m_1$ and $m_2$ is defined as
\bea\label{Dis11}
d(m_1,m_2)=\frac{1}{2}\sum\limits_{k=0}^{g-1}|\beta_k^1-\beta_k^2|.
\eea
\end{definition}

\section{Improving distance and ranking methods for linguistic distribution assessments}\label{sec:3}

In this section, it is pointed out that previous distance measure and ranking method for linguistic distribution assessments present some flaws, and a new distance measure and a new ranking method are then introduced to overcome such flaws. First, it is showed the flaws of the distance measure defined in \cite{Zhang14infu}, i.e. Definition \ref{Dis1}, with Example \ref{ex2}.
\begin{example}\label{ex2}
Let $S^{\textrm{example}}=\{s_0,s_1,\ldots,s_{4}\}$ be a linguistic term set and there are three linguistic distribution assessments: $m_1=\{\langle s_0,0\rangle,\langle s_1,1\rangle,\langle s_2,0\rangle,\langle s_3,0\rangle,\langle s_4,0\rangle\}$, $m_2=\{\langle s_0,1\rangle,\langle s_1,0\rangle,\langle s_2,0\rangle,\langle s_3,0\rangle,\langle s_4,0\rangle\}$ and $m_3=\{\langle s_0,0\rangle, \langle s_1,0\rangle,\langle s_2,0\rangle,\langle s_3,0\rangle,\langle s_4,1\rangle\}$. By the definition of the linguistic distribution assessment, we know that a linguistic term $s_i$ of $S$ is a special case of the linguistic distribution assessment $m=\{\langle s_k,\beta_k\rangle|k=0,1,\ldots,g-1\}$ with $\beta_i=1$ and $\beta_k=0$, for all $k\neq i$, i.e. $m_1=s_1$, $m_2=s_0$ and $m_3=s_4$. However, by Definition \ref{Dis1}, we can obtain
\[d(m_1,m_2)=\frac{1}{2}(1+1+0+0+0)=1,\ d(m_1,m_3)=\frac{1}{2}(0+1+0+0+1)=1,\]
which means that the distance between $s_1$ and $s_0$ is equal to that between $s_1$ and $s_4$. Obviously it is unreasonable.
\end{example}

From Definition \ref{Dis1}, it can be seen that (\ref{Dis11}) just calculates the deviation between symbolic proportions and ignores the importance of linguistic terms. In this paper a novel distance measure between two linguistic distribution assessments is defined as:
\begin{definition}\label{Dis2}
 Let $m_1=\{\langle s_k,\beta_k^1\rangle| k=0,1,\ldots,g-1\}$ and $m_2=\{\langle s_k,\beta_k^2\rangle| k=0,1,\ldots,g-1\}$ be two linguistic distribution assessments of a linguistic term set $S$, then the distance between $m_1$ and $m_2$ is defined as
\bea\label{Dis21}
d(m_1,m_2)=\frac{1}{g-1}\left|\sum\limits_{k=0}^{g-1}(\beta_k^1-\beta_k^2)k\right|.
\eea
\end{definition}

Reconsider Example \ref{ex2}. By Definition \ref{Dis2} is calculated $d(m_1,m_2)=0.25,\ d(m_1,m_3)=0.75$, which is more reasonable to the intuition.

Looking now at the ranking problem of a linguistic distribution assessments collection. Zhang {\itshape et al.} \cite{Zhang14infu} utilized the expectation values to rank linguistic distribution assessments. However, there may be cases that the expectation values of some linguistic distribution assessments are equal. As a result, the comparison rule mentioned in Definition \ref{def-4} sometimes cannot distinguish these linguistic distribution assessments. As the uncertainty in the sense of inaccuracy of a linguistic distribution assessment is reflected by its distribution, which can be measured by using Shannon's entropy\cite{Cover06book}. It is then proposed that the ranking of linguistic distribution assessments will be computed by an inaccuracy function for linguistic distribution assessments and several comparison rules introduced below.
\begin{definition}\label{inacc}
Let $m=\{\langle {{s}_{k}},{{\beta }_{k}}\rangle |k=0,1,\ldots ,g-1\}$ be a linguistic distribution assessment of a linguistic term set $S=\{{{s}_{0}},{{s}_{1}},\ldots ,{{s}_{g-1}}\}$, where $s_k\in S$, ${{\beta }_{k}}\geqslant 0$, $k=0,1,\ldots ,g-1$ and $\sum\nolimits_{k=0}^{g-1}{{{\beta }_{k}}}=1$. The inaccuracy function of $m$ is defined as $T(m)=-\sum\nolimits_{k=0}^{g-1}{{{\beta }_{k}}{{\log }_{2}}}{{\beta }_{k}}$ \footnote{$0\log_2 0=0$ is defined in this paper.}.
\end{definition}

\begin{definition}\label{def-6}
Let $m_1$ and $m_2$ be two linguistic distribution assessments, then the comparison rules are defined as follows:
(1)	If $E(m_1)>E(m_2)$, then $m_1>m_2$;
(2)	If $E(m_1)=E(m_2)$ and $T(m_1)<T(m_2)$, then $m_1>m_2$; If $E(m_1)=E(m_2)$ and $T(m_1)=T(m_2)$, then $m_1=m_2$.
\end{definition}

\begin{example}
Let $S^{\textrm{example}}=\{s_0,s_1,\ldots,s_{4}\}$ be a linguistic term set and there are three linguistic distribution assessments: $m_1=\{\langle s_1,0.3\rangle,\langle s_2,0.4\rangle,\langle s_3,0.3\rangle\}$, $m_2=\{\langle s_2,1\rangle\}$ and $m_3=\{\langle s_1,0.3\rangle, \langle s_2,0.7\rangle\}$.

By Definition \ref{def-4}, $E(m_1)=(s_2,0)$, $E(m_2)=(s_2,0)$, $E(m_3)=(s_2,-0.3)$. Therefore, $m_1=m_2>m_3$. However, if the inaccuracy function values of the three linguistic distribution assessments are, $T(m_1)=1.5710$, $T(m_2)=0$, $T(m_3)=0.8813$. According to Definition \ref{def-6}, it follows that $m_2>m_1>m_3$. Obviously, the new comparison rules can distinguish linguistic distribution assessments more effectively.
\end{example}

\section{Dealing with multi-granular linguistic distribution assessments}\label{sec:4}
As the focus of this paper is to deal with LGDM problems with multi-granular linguistic information, this section is devoted to develop a new computational model to deal with multi-granular linguistic distribution assessments. Due to the fact that our proposal for dealing with multi-granular linguistic distribution assessments and obtaining interpretable results will be based on tools introduced for linguistic 2-tuple values, Subsection \ref{sec:4-1} shows how to transform a linguistic 2-tuple into a linguistic distribution assessment. Afterwards, a new model for managing multi-granular linguistic distribution assessments is developed in Subsection \ref{sec:4-2}.

\subsection{Transforming a linguistic 2-tuple into a linguistic distribution assessment}\label{sec:4-1}

This subsection discusses the relationship between a linguistic 2-tuple and a linguistic distribution assessment. For convenience, let $S=\{s_0,s_1,\ldots,s_{g-1}\}$ be a linguistic term set as defined in section \ref{sec:2} and $(s_k,\alpha)$ be a linguistic 2-tuple, then:

(1) If $\alpha>0$, $(s_k,\alpha)$ denotes the linguistic information between $s_k$ and $s_{k+1}$.

(2) If $\alpha<0$, $(s_k,\alpha)$ denotes the linguistic information between $s_{k-1}$ and $s_{k}$.

(3) If $\alpha=0$, $(s_k,\alpha)$ denotes the linguistic information $s_{k}$.

\begin{proposition}\label{pro0}
Let $l$ be the integer part of $\kappa=\Delta^{-1}(s_k,\alpha)$, then a linguistic 2-tuple $(s_k,\alpha)$ denotes the linguistic information between $s_l$ and $s_{l+1}$ if $\alpha\neq 0$.
\end{proposition}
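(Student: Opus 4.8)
The plan is to prove the statement by a direct case analysis on the sign of $\alpha$, relying only on the identity $\kappa = \Delta^{-1}(s_k,\alpha) = k+\alpha$ from Definition \ref{def-2}, the range restriction $\alpha \in [-0.5,0.5)$ inherited from Definition \ref{def-1}, and the interpretation of a 2-tuple already recorded in items (1)--(3) preceding the statement. Since $\alpha \neq 0$ is assumed, only the two cases $\alpha > 0$ and $\alpha < 0$ need to be examined; in each I would compute the integer part $l=\lfloor\kappa\rfloor$ explicitly (noting that $\kappa\in[0,g-1]\geqslant 0$, so the integer part coincides with the floor) and match it against the corresponding interpretation.

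First I would treat the case $\alpha > 0$. Here $0 < \alpha < 0.5$, so $\kappa = k+\alpha$ satisfies $k < \kappa < k + 0.5 < k+1$, whence $l = \lfloor \kappa \rfloor = k$. Consequently $s_l = s_k$ and $s_{l+1} = s_{k+1}$, and by interpretation (1) the 2-tuple $(s_k,\alpha)$ denotes the linguistic information lying between $s_k$ and $s_{k+1}$, that is, between $s_l$ and $s_{l+1}$, as required.

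Next I would treat the case $\alpha < 0$. Here $-0.5 \leqslant \alpha < 0$, so $\kappa = k + \alpha$ satisfies $k - 1 < k - 0.5 \leqslant \kappa < k$, whence $l = \lfloor \kappa \rfloor = k - 1$. Consequently $s_l = s_{k-1}$ and $s_{l+1} = s_k$, and by interpretation (2) the 2-tuple $(s_k,\alpha)$ denotes the linguistic information lying between $s_{k-1}$ and $s_k$, i.e. between $s_l$ and $s_{l+1}$. Combining the two cases yields the claim.

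The argument is essentially a reformulation that unifies interpretations (1) and (2) through the floor function, so there is no deep obstacle. The only points requiring care are the floor computation in the negative case, where the boundary value $\alpha = -0.5$ must be checked to confirm $\lfloor k - 0.5\rfloor = k-1$, and the admissibility of the indices: one should verify that $\alpha > 0$ forces $k \leqslant g-2$ (so that $s_{l+1}$ exists) while $\alpha < 0$ forces $k \geqslant 1$ (so that $s_l = s_{k-1}$ exists), both being immediate consequences of the constraint $\kappa \in [0,g-1]$.
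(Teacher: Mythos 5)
Your proof is correct and follows essentially the same route as the paper's: a case analysis on the sign of $\alpha$, computing the integer part $l$ in each case ($l=k$ for $\alpha>0$, $l=k-1$ for $\alpha<0$) and matching the result against interpretations (1) and (2). Your version is in fact slightly more careful than the paper's, since you explicitly invoke the range $\alpha\in[-0.5,0.5)$ to justify the floor computation and check that the indices $s_{l}$ and $s_{l+1}$ remain within the term set, but these refinements do not change the argument.
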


\begin{proof}
We consider two cases.

Case 1: $\alpha>0$. In this case, $\Delta^{-1}(s_k,\alpha)=k+\alpha>k$. Hence, $l=k$ and $l+1=k+1$.

Case 2: $\alpha<0$. In this case, $\Delta^{-1}(s_k,\alpha)=k+\alpha<k$. Hence, $l=k-1$ and $l+1=k$.

According to the previous results, a linguistic 2-tuple $(s_k,\alpha)$ denotes the linguistic information between $s_l$ and $s_{l+1}$ if $\alpha\neq 0$. This completes the proof of Proposition \ref{pro0}.
\end{proof}

Proposition \ref{pro0} demonstrates that a linguistic 2-tuple $(s_k,\alpha)$, $(\alpha \neq 0)$ can denote the linguistic information between two successive linguistic terms $s_l$ and $s_{l+1}$. From the perspective of linguistic distribution assessments, the linguistic information between $s_l$ and $s_{l+1}$ should be denoted as a linguistic distribution assessment $m=\{\langle s_l,1-\beta\rangle,\langle s_{l+1},\beta\rangle\}$. It is then necessary to determine the value of $\beta$.

As the linguistic information between $(s_k,\alpha)$ and $m$ is equivalent, the expectation of $m$ should be equal to $(s_k,\alpha)$. Therefore,  $\Delta(l\times(1-\beta)+(l+1)\times\beta)=(s_k,\alpha)$, i.e.
\begin{equation}\label{eq}
l\times(1-\beta)+(l+1)\times\beta=\Delta^{-1}(s_k,\alpha).
\end{equation}

By solving (\ref{eq}), $\beta=\Delta^{-1}(s_k,\alpha)-l$.

From the previous analysis, a linguistic 2-tuple $(s_k,\alpha)$, $(\alpha \neq 0)$ can be denoted as a linguistic distribution assessment $m=\{\langle s_l,1-\beta\rangle,\langle s_{l+1},\beta\rangle\}$, where $l$ is the integer part of $\Delta^{-1}(s_k,\alpha)$ and $\beta=\Delta^{-1}(s_k,\alpha)-l$.

It is easy to verify that the above statement also holds for the case $\alpha=0$, i.e. a linguistic 2-tuple $(s_k,0)$ can be denoted as a linguistic distribution assessment $m=\{\langle s_l,1-\beta\rangle,\langle s_{l+1},\beta\rangle\}$, where $l=k$ and $\beta=0$.
\begin{definition}\label{def-k}
Let $S=\{s_0,s_1,\ldots,s_{g-1}\}$ be a linguistic term set and $\Omega$ be the set of all the linguistic distribution assessments of $S$,  and there exists a function $F$, which can transform a linguistic 2-tuple $(s_k,\alpha)$ into its equivalent linguistic distribution assessment. The transformation function is defined as
\bea\label{eq:tran}
\begin{split}
&F: S\times[-0.5,0.5)\rightarrow \Omega\\
&F(s_k,\alpha)=\{\langle s_l,1-\beta\rangle,\langle s_{l+1},\beta\rangle\},\\
\end{split}
\eea
where $l$ is the integer part of $\Delta^{-1}(s_k,\alpha)$ and $\beta=\Delta^{-1}(s_k,\alpha)-l$.
\end{definition}

For Definition \ref{def-k}, the following theorem is given:
\begin{theorem}\label{thm00}
Let $S=\{s_0,s_1,\ldots,s_{g-1}\}$ be a linguistic term set. The equivalent linguistic distribution assessment of a linguistic 2-tuple $(s_k,\alpha)$ is
\bea
F(s_k,\alpha)=\left\{\begin{array}{l}\begin{aligned}
&\{\langle s_k,1-\alpha\rangle,\langle s_{k+1},\alpha\rangle\}& {\rm if}\ \alpha\geqslant 0,\\
&\{\langle s_{k-1},-\alpha\rangle,\langle s_{k},1+\alpha\rangle\}& {\rm if}\ \alpha<0. \\
\end{aligned}
\end{array}\right.
\eea
\end{theorem}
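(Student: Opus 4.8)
The plan is to take Definition \ref{def-k} and simply unwind the two pieces of data it depends on, namely the integer part $l$ of $\Delta^{-1}(s_k,\alpha)$ and the proportion $\beta=\Delta^{-1}(s_k,\alpha)-l$, in each of the sign cases for $\alpha$. Since $\Delta^{-1}(s_k,\alpha)=k+\alpha$ by Definition \ref{def-2}, everything reduces to determining $\lfloor k+\alpha\rfloor$ as a function of the sign of $\alpha$, and this is exactly the case analysis already carried out in Proposition \ref{pro0}. So the proof should proceed by invoking Proposition \ref{pro0} (or re-deriving it in one line) and then substituting the resulting $l$ and $\beta$ into the formula for $F(s_k,\alpha)$.

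First I would handle the case $\alpha\geqslant 0$. Here I note that $\Delta^{-1}(s_k,\alpha)=k+\alpha$ with $0\leqslant\alpha<0.5$, so the integer part is $l=k$ and $\beta=(k+\alpha)-k=\alpha$. Plugging $l=k$ and $\beta=\alpha$ into $F(s_k,\alpha)=\{\langle s_l,1-\beta\rangle,\langle s_{l+1},\beta\rangle\}$ yields $\{\langle s_k,1-\alpha\rangle,\langle s_{k+1},\alpha\rangle\}$, which is the first branch. Note this also correctly covers the boundary $\alpha=0$, giving $\{\langle s_k,1\rangle,\langle s_{k+1},0\rangle\}$, consistent with the remark made just before Definition \ref{def-k}.

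Next I would treat $\alpha<0$. By Proposition \ref{pro0}, Case 2, here $\Delta^{-1}(s_k,\alpha)=k+\alpha<k$, and since $-0.5\leqslant\alpha<0$ we have $k-1<k+\alpha<k$, so the integer part is $l=k-1$ and $\beta=(k+\alpha)-(k-1)=1+\alpha$. Substituting $l=k-1$ and $\beta=1+\alpha$ gives $\{\langle s_{k-1},1-(1+\alpha)\rangle,\langle s_{k},1+\alpha\rangle\}=\{\langle s_{k-1},-\alpha\rangle,\langle s_{k},1+\alpha\rangle\}$, which is the second branch.

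I do not anticipate a genuine obstacle here, since the theorem is essentially a restatement of Definition \ref{def-k} with the floor function evaluated explicitly; the only point requiring a little care is the boundary value $\alpha=0$, which must be placed in the $\alpha\geqslant 0$ branch (consistent with the convention $F(s_k,0)=\{\langle s_k,1\rangle,\ldots\}$ noted before the definition) rather than the $\alpha<0$ branch, and verifying that the range restriction $\alpha\in[-0.5,0.5)$ guarantees $\lfloor k+\alpha\rfloor$ equals $k$ or $k-1$ and nothing else. I would close by remarking that in both cases the two proportions sum to $1$ and are nonnegative, so the output is a legitimate linguistic distribution assessment in the sense of Definition \ref{def-4}.
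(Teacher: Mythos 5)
Your proposal is correct and follows essentially the same route as the paper's own proof: a case split on the sign of $\alpha$, computing $l$ and $\beta$ from $\Delta^{-1}(s_k,\alpha)=k+\alpha$, and substituting into the formula of Definition \ref{def-k}. Your additional remarks (explicitly using $\alpha\in[-0.5,0.5)$ to pin down the integer part, checking the boundary $\alpha=0$, and verifying the proportions are nonnegative and sum to $1$) are sound refinements but not a different argument.
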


\begin{proof}
If $\alpha\geqslant 0$, $\Delta^{-1}(s_k,\alpha)\geqslant k$, then $l=k$ and $\beta=\Delta^{-1}(s_k,\alpha)-l=k+\alpha-k=\alpha$. By (\ref{eq:tran}), $F(s_k,\alpha)=\{\langle s_k,1-\alpha\rangle,\langle s_{k+1},\alpha\rangle\}$.

If $\alpha< 0$, $\Delta^{-1}(s_k,\alpha)<k$, then $l=k-1$ and $\beta=\Delta^{-1}(s_k,\alpha)-l=k+\alpha-k+1=1+\alpha$. By (\ref{eq:tran}), $F(s_k,\alpha)=\{\langle s_{k-1},-\alpha\rangle,\langle s_{k},1+\alpha\rangle\}$.

This completes the proof of Theorem \ref{thm00}.
\end{proof}

Definition \ref{def-k} and Theorem \ref{thm00} establish the relationship between a linguistic 2-tuple and a linguistic distribution assessment, which will be helpful in the following section.

\begin{example}\label{ex3}
Let $S^{\textrm{example}}=\{s_0,s_1,\ldots,s_{4}\}$ be a linguistic term set, $(s_2, 0.6)$ and $(s_2,-0.3)$ be two linguistic 2-tuples. Based on Theorem \ref{thm00},  $F(s_2,0.6)=\{\langle s_2,1-0.6\rangle,\langle s_{2+1},0.6\rangle\}=\{\langle s_2,0.4\rangle,\langle s_{3},0.6\rangle\}$ and $F(s_2,-0.3)=\{\langle s_{2-1},0.3\rangle,\langle s_{2},1-0.3\rangle\}=\{\langle s_1,0.3\rangle,\langle s_{2},0.7\rangle\}$.
\end{example}

\subsection{Unifying multi-granular linguistic distribution assessments}\label{sec:4-2}

To deal with decision making problems with multi-granular linguistic information, a natural solution is to unify them and derive linguistic information based on the same linguistic term set \cite{Herrera00fss,Herrera01tsmcb}. Afterwards, the multi-granular linguistic information can be fused. This subsection focuses on the unification of multi-granular linguistic distribution assessments.

For convenience, some notations are defined as follows. Let $\{S^{g(1)},S^{g(2)},\ldots,S^{g(n)}\}$ be a set of linguistic term sets, where $S^{g(i)}=\{s_0^{g(i)},s_1^{g(i)},\ldots, s_{g(i)-1}^{g(i)}\}$ is a linguistic term set with an odd granularity $g(i)$, $i=1,2,\ldots,n$, and an ELH is constructed by Eq. (\ref{ELH}) as $ELH=\bigcup_{i=1}^{n+1} l(i,g(i))$, where $i^*=n+1$ is the level of $l(i^*,g(i^*))$. By Proposition \ref{multi1}, $g(i^*)=LCM(\delta_1,\delta_2,\ldots,\delta_n)+1$, where $\delta_i=g(i)-1$, $i=1,2,\ldots,n$. For the level $i^*$, the linguistic term set is denoted by $S^{g(i^*)}=\{s_0^{g(i^*)},s_1^{g(i^*)},\ldots,s_{g(i^*)-1}^{g(i^*)}\}$. Moreover, a linguistic distribution assessment on a linguistic term set is denoted as $S^{g(i)}$ by $m^{g(i)}=\{\langle s_k^{g(i)},\beta_k^i\rangle|k=0,1,\ldots,g(i)-1\}$.

Now, it is necessary to transform a linguistic distribution assessment $m^{g(i)}$ into a linguistic distribution assessment on another linguistic term set $S^{g(i')}$, where $i'=1,2,\ldots,n$ and $i'\neq i$.

Motivated by the extended transformation function of the ELH model, it is proposed a two-stage procedure to conduct the transformation process.

\textbf{Stage 1:} Transform the linguistic distribution assessment $m^{g(i)}$ into a linguistic distribution assessment on $S^{g(i^*)}$.

\textbf{Stage 2:} Transform the linguistic distribution assessment on $S^{g(i^*)}$ into a linguistic distribution assessment on $S^{g(i')}$.

 Looking at Stage 1, intuitively, it can be first transformed the linguistic terms in $S^{g(i)}$ into linguistic information in the linguistic term set $S^{g(i^*)}$ by the function $TF^{i}_{i^*}$. As the transformation is from a low level to a high level, the transformed linguistic information are normative linguistic terms without symbolic translations. As a result, it is only necessary to attach corresponding symbolic proportions in $S^{g(i)}$ with each linguistic term in $S^{g(i^*)}$. By doing so, a linguistic distribution assessment on $S^{g(i^*)}$ is derived. Formally, it is given the following definition.
\begin{definition}\label{multi2}
Let $\{S^{g(1)},S^{g(2)},\ldots,S^{g(n)}\}$ and $m^{g(i)}$ be defined as before, then $m^{g(i)}$ can be transformed into a linguistic distribution assessment on $S^{g(i^*)}$ by
\begin{equation}
m^{g(i^*)}=\{\langle s_k^{g(i^*)},\gamma_k^{i}\rangle|k=0,1,\ldots,g(i^*)-1\}
\end{equation}
with
\ben\label{eqn:14}
\gamma_k^{i}=\left\{\begin{array}{l}
\begin{split}
&\beta^i_{l(i,k)} &{\rm if}\  l(i,k)&\in \{0,1,\ldots,g(i)-1\};\\
&0  &{\rm if}\  l(i,k)&\notin \{0,1,\ldots,g(i)-1\},\
\end{split}
\end{array}\right.
\een
where $l(i,k)=\dfrac{k*(g(i)-1)}{g(i^*)-1}$, $k=0,1,\ldots,g(i^*)-1$.
\end{definition}

The meaning of Definition \ref{multi2} is to find out the linguistic terms in $S^{g(i^*)}$, whose corresponding linguistic terms in $S^{g(i)}$ have non-zero symbolic proportions in $m^{g(i)}$, and then assign the non-zero symbolic proportions to them.

\begin{theorem}\label{thm0}
The transformed $m^{g(i^*)}$ is a linguistic distribution assessment of $S^{g(i^*)}$.
\end{theorem}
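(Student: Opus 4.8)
The plan is to check directly the two defining properties of a linguistic distribution assessment of $S^{g(i^*)}$ required by Definition \ref{def-4}: that every symbolic proportion $\gamma_k^i$ is nonnegative, and that the proportions sum to one. Nonnegativity is immediate, since by (\ref{eqn:14}) each $\gamma_k^i$ equals either $0$ or some $\beta^i_{l(i,k)}$, and the latter is nonnegative because $m^{g(i)}$ is itself a linguistic distribution assessment of $S^{g(i)}$. Hence the entire content of the theorem reduces to verifying that $\sum_{k=0}^{g(i^*)-1}\gamma_k^i = 1$.

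The key step is to identify exactly which indices $k$ contribute a nonzero term. Writing $\delta_i = g(i)-1$ and recalling from Proposition \ref{multi1} that $g(i^*)-1 = LCM(\delta_1,\ldots,\delta_n)$ is a common multiple of every $\delta_i$, I would set $c_i = \frac{g(i^*)-1}{g(i)-1}$, which is therefore a positive integer. Then $l(i,k) = \frac{k(g(i)-1)}{g(i^*)-1} = \frac{k}{c_i}$, so $l(i,k)$ is an integer precisely when $c_i \mid k$. Moreover, as $k$ runs over $\{0,1,\ldots,g(i^*)-1\}$ the quantity $l(i,k)$ increases monotonically from $0$ to $g(i)-1$, so whenever it happens to be an integer it automatically lies in $\{0,1,\ldots,g(i)-1\}$. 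Thus the condition $l(i,k)\in\{0,1,\ldots,g(i)-1\}$ appearing in (\ref{eqn:14}) is equivalent to $c_i \mid k$.

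It then remains to observe that as $k$ ranges over the multiples $0, c_i, 2c_i, \ldots, \delta_i c_i = g(i^*)-1$ lying in the index range, the map $k \mapsto l(i,k) = \frac{k}{c_i}$ is a bijection onto $\{0,1,\ldots,g(i)-1\}$. Consequently the sum collapses into a mere reindexing, $\sum_{k=0}^{g(i^*)-1}\gamma_k^i = \sum_{j=0}^{g(i)-1}\beta_j^i = 1$, the last equality holding because $m^{g(i)}$ is a linguistic distribution assessment. I expect the only delicate point to be this divisibility and bijection bookkeeping, namely confirming that every index of $S^{g(i)}$ is hit exactly once and that no in-range multiple of $c_i$ is omitted or double-counted. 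This is precisely where the $LCM$ construction of $g(i^*)$ in Proposition \ref{multi1} does the essential work, by guaranteeing $\delta_i \mid (g(i^*)-1)$ so that $c_i$ is an integer and the modal points of level $i$ align exactly with those of level $i^*$.
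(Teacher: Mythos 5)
Your proposal is correct and follows essentially the same route as the paper's own proof: both identify the nonzero entries as those at indices $k$ that are multiples of $\frac{g(i^*)-1}{g(i)-1}$, observe that these indices map bijectively onto $\{0,1,\ldots,g(i)-1\}$ so the nonzero $\gamma_k^i$ are exactly the $\beta_j^i$, and conclude $\sum_k \gamma_k^i = \sum_j \beta_j^i = 1$. Your version is in fact slightly more self-contained, since you justify the integrality of $c_i$ directly from the $LCM$ construction and note nonnegativity explicitly, where the paper cites the one-to-one correspondence from the ELH literature.
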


\begin{proof}
According to \cite{Espinilla11ci}, the transformation from $S^{g(i)}$ to $S^{g(i^*)}$ is one-to-one, i.e. each linguistic term of $S^{g(i)}$ corresponds to a linguistic term of $S^{g(i^*)}$. Specifically, we have
\begin{equation*}
s_0^{g(i)} \leftrightarrow s_0^{g(i^*)}, \ s_1^{g(i)} \leftrightarrow s_{\frac{g(i^*)-1}{g(i)-1}}^{g(i^*)},\ \ldots,\ s_{g(i)-1}^{g(i)} \leftrightarrow s_{g(i^*)-1}^{g(i^*)}.
\end{equation*}

If $k=0, \dfrac{g(i^*)-1}{g(i)-1},2\cdot \dfrac{g(i^*)-1}{g(i)-1},\ldots, g(i^*)-1$, then $l(i,k)=0,1,2,\ldots,g(i)-1$ and
$\gamma_k^i=\beta_0^i,\beta_1^i,\beta_2^i,\ldots,$\\$\beta_{g(i)-1}^i$. Thus we have $\sum_{l(i,k)\in \{0,1,\ldots,g(i)-1\}}\gamma_k^i=\sum_{k=0}^{g(i)-1}\beta_k^i=1$.

Since $\gamma_k^i=0$, $\forall l(i,k)\notin \{0,1,\ldots,g(i)-1\}$ it is obtained $\sum_{k=0}^{g(i^*)-1}\gamma_k^i=1$. Therefore, $m^{g(i^*)}$ is a linguistic distribution assessment of $S^{g(i^*)}$, which completes the proof of Theorem \ref{thm0}.
\end{proof}

At Stage 2 it is transformed the linguistic distribution assessment on $S^{g(i^*)}$ into a linguistic distribution assessment on $S^{g(i')}$.

At first glance, it might be thought that we can also utilize the transformation function $TF_{i'}^{i^*}$ to transform the linguistic terms in $S^{g(i^*)}$ into linguistic information of $S^{g(i')}$, i.e.
\bea\label{mi}
TF_{i}^{i^*}(s_k^{g(i^*)},0)=\Delta\left(\dfrac{k \cdot (g(i)-1)}{g(i^*)-1}\right), k=0,1,\ldots,g(i^*)-1,
\eea
and then attach the corresponding symbolic proportions. However, such transformation is from a high level to a low level. Hence, some linguistic terms in $S^{g(i^*)}$ may be transformed into linguistic 2-tuples of $S^{g(i')}$. In this way, the derived result is not a normative linguistic distribution assessment of $S^{g(i')}$.

To address this issue and according to Definition \ref{def-k}, a linguistic 2-tuple can be transformed into its equivalent linguistic distribution assessment by (\ref{eq:tran}). Therefore, it can be first transformed each linguistic 2-tuple derived by (\ref{mi}) into its equivalent linguistic distribution assessment by using Definition \ref{def-k} and obtain $g(i^*)$ linguistic distribution assessments of $S^{g(i')}$, i.e.
\bea\label{mik}
\begin{split}
m_k^{S^{g(i')}}&=F\left(TF_{i}^{i^*}(s_k^{g(i^*)},0)\right)\\
&=\{\langle s_{l'(i,k)}^{g(i)},1-\theta(i,k)\rangle,\langle s_{l'(i,k)+1}^{g(i)},\theta(i,k)\rangle\},\ k=0,1,\ldots, g(i^*),
\end{split}
\eea
 where $l'(i,k)$ is the integer part of $\dfrac{k \cdot (g(i)-1)}{g(i^*)-1}$ and $\theta(i,k)=\dfrac{k \cdot (g(i)-1)}{g(i^*)-1}-l'(i,k)$.

Considering the symbolic proportion of each $TF_{i}^{i^*}(s_k^{g(i^*)},0)$, it can be aggregated these linguistic distribution assessments $m_k^{S^{g(i')}}$, $k=0,1,\ldots,g(i^*)-1$ into a new one by the DAWA operator, which yields a linguistic distribution assessment of $S^{g(i')}$. Formally, it is provided the following definition.

\begin{definition}\label{multi3}
Let $\{S^{g(1)},S^{g(2)},\ldots,S^{g(n)}\}$ and $m^{g(i)}$ be defined as before, then $m^{g(i^*)}$ derived by Definition \ref{multi2} can be transformed into a linguistic distribution assessment on $S^{g(i')}$ by
\begin{equation}\label{mi1}
m^{g(i')}={\rm DAWA}_{\omega}\left(m_0^{S^{g(i')}},m_1^{S^{g(i')}},\ldots,m_{g(i^*)-1}^{S^{g(i')}}\right),
\end{equation}
where ${\omega}=(\gamma_0^i,\gamma_1^i,\ldots,\gamma_{g(i^*)-1}^i)^{\rm T}$ and $m_k^{S^{g(i')}}$, $k=0,1,\ldots,g(i^*)-1$ is calculated by (\ref{mik}).
\end{definition}

The procedures of the two-stage transformation are illustrated by Fig. \ref{fig1}.
\begin{figure}
  \centering
  \includegraphics[scale=1]{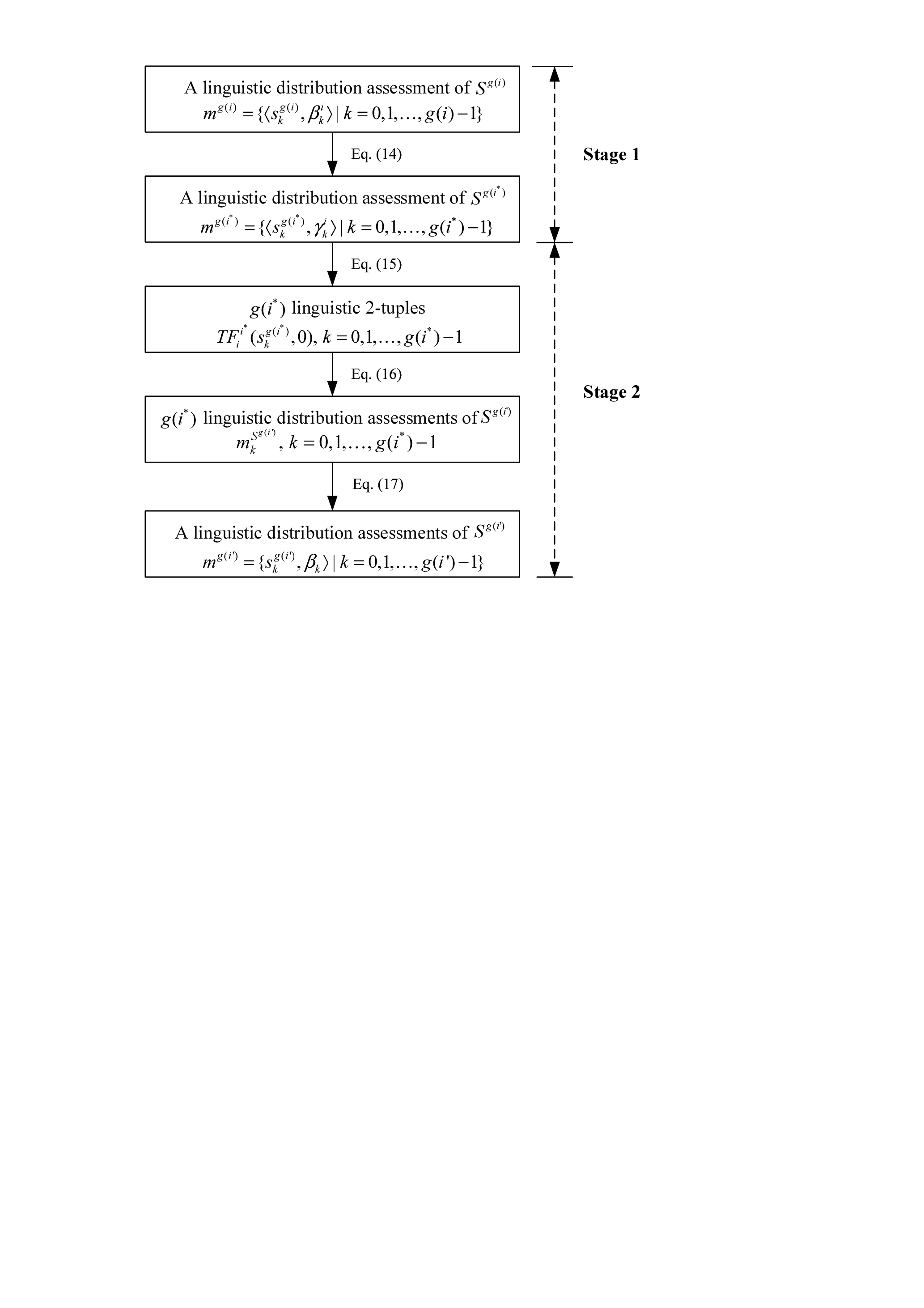}\\
  \caption{Flowchart of the two-stage transformation}\label{fig1}
\end{figure}

\begin{theorem}\label{thm1}
$m^{g(i')}$ derived by Definition \ref{multi3} is a linguistic distribution assessment.
\end{theorem}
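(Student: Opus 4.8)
The plan is to show that $m^{g(i')}$, being the DAWA aggregation of the family $\{m_k^{S^{g(i')}}\}_{k=0}^{g(i^*)-1}$ with weighting vector $\omega=(\gamma_0^i,\ldots,\gamma_{g(i^*)-1}^i)^{\rm T}$, inherits the two defining properties of a linguistic distribution assessment of $S^{g(i')}$: nonnegativity of every symbolic proportion and the normalization $\sum_t\beta_t=1$. Since the DAWA operator of Definition \ref{DAWA} is nothing more than a weighted combination of the component proportions, the whole claim reduces to verifying that (i) each $m_k^{S^{g(i')}}$ is itself a legitimate linguistic distribution assessment, and (ii) $\omega$ is a legitimate weighting vector, i.e.\ $\gamma_k^i\geqslant0$ and $\sum_{k=0}^{g(i^*)-1}\gamma_k^i=1$.

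I would first dispatch ingredient (i). By its construction in (\ref{mik}), each $m_k^{S^{g(i')}}$ is the image under the transformation $F$ of Definition \ref{def-k} of a single linguistic 2-tuple of $S^{g(i')}$. By Theorem \ref{thm00}, $F$ returns a two-term assessment whose symbolic proportions are $1-\theta(i,k)$ and $\theta(i,k)$; since $\theta(i,k)$ is a fractional part it lies in $[0,1)$, so both proportions are nonnegative and sum to $1$, whence $m_k^{S^{g(i')}}$ is a valid distribution of $S^{g(i')}$. For ingredient (ii), nonnegativity $\gamma_k^i\geqslant0$ is immediate from Definition \ref{multi2}, where each $\gamma_k^i$ equals either some $\beta^i_{l(i,k)}\geqslant0$ or $0$, while the normalization $\sum_{k=0}^{g(i^*)-1}\gamma_k^i=1$ is precisely the content of Theorem \ref{thm0}.

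With both ingredients secured, I would write $m_k^{S^{g(i')}}=\{\langle s_t^{g(i')},\beta_t^{(k)}\rangle\}_t$ and $m^{g(i')}=\{\langle s_t^{g(i')},\beta_t\rangle\}_t$, so that Definition \ref{DAWA} gives $\beta_t=\sum_{k=0}^{g(i^*)-1}\gamma_k^i\,\beta_t^{(k)}$. Nonnegativity of each $\beta_t$ is clear, being a sum of products of nonnegative numbers. For the normalization I would interchange the order of summation, $\sum_t\beta_t=\sum_t\sum_k\gamma_k^i\beta_t^{(k)}=\sum_k\gamma_k^i\big(\sum_t\beta_t^{(k)}\big)=\sum_k\gamma_k^i\cdot1=1$, invoking ingredient (i) for the inner sum and ingredient (ii) for the outer one, which establishes that $m^{g(i')}$ is a linguistic distribution assessment of $S^{g(i')}$. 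The only point demanding genuine care — and hence the main, if modest, obstacle — is the normalization of the weights $\gamma_k^i$: this is not self-evident from Definition \ref{multi3} alone and rests essentially on the one-to-one correspondence between the terms of $S^{g(i)}$ and $S^{g(i^*)}$ underlying Theorem \ref{thm0}. Everything else is a routine closure computation for a convex combination of distributions.
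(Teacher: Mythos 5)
Your proof is correct and takes essentially the same route as the paper's: both reduce the claim to (i) each $m_k^{S^{g(i')}}$ being a linguistic distribution assessment, (ii) the weights $\gamma_k^i$ forming a valid weighting vector via Theorem \ref{thm0}, and (iii) closure of linguistic distribution assessments under the DAWA operator. The only difference is that the paper cites closure property (iii) from the literature, whereas you verify it explicitly by the interchange-of-summation computation.
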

\begin{proof}
Based on the above analysis, we have that each $m^{g(i')}$, $k=0,1,\ldots,g(i^*)-1$ is a linguistic distribution assessment. Moreover, $\sum_{k=0}^{g(i^*)-1}\gamma_k^i=1$. Since the weighted average of some linguistic distribution assessments is also a linguistic distribution assessment \cite{Zhang14infu}, $m^{g(i')}$ is a linguistic distribution assessment.
\end{proof}

\begin{example}\label{ex11}
Let $S^5=\{s_0^5,s_1^5,\ldots,s_4^5\}$ and $S^7=\{s_0^7,s_1^7,\ldots,s_6^7\}$ be two linguistic term sets, and there are two linguistic distribution assessments to be fused, i.e. $\{\langle s_1^5,0.3\rangle,\langle s_2^5,0.5\rangle,\langle s_3^5,0.2\rangle\}$ and $\{\langle s_1^7,0.25\rangle,\langle s_2^7,0.3\rangle,$\\$\langle s_3^7,0.45\rangle\}$.
\end{example}

Here, there are two linguistic term sets, i.e. $S^5$ and $S^7$. According to Proposition \ref{multi1}, we have $g(i^*)=LCM(4,6)=12$. Therefore, $S^{g(i^*)}=S^{13}=\{s_0^{13}, s_1^{13},\ldots, s_{12}^{13}\}$. For the first linguistic distribution assessment, since ${\bf 3}\times4/12={\bf 1}$, ${\bf6}\times4/12={\bf 2}$, ${\bf9}\times4/12={\bf 3}$, by (\ref{eqn:14}), then $\gamma_3^1=\beta^1_1=0.3$, $\gamma_6^1=\beta^1_2=0.5$, $\gamma_9^1=\beta^1_3=0.2$ and a linguistic distribution assessment $m^{13}_1=\{\langle s_3^{13},0.3\rangle,\langle s_6^{13},0.5\rangle,\langle s_9^{13},0.2\rangle\}$ is derived.

In what follows, we attempt to transform $m_1^{13}$ into a linguistic distribution assessment on $S^7$. By (\ref{mi}) and (\ref{mik}), the linguistic terms in $S^{13}$ can be transformed into the following 13 linguistic distribution assessments on $S^7$: $m_0^{7}=\{\langle s_0^7, 1\rangle\}$, $m_1^{7}=\{\langle s_0^7,0.5\rangle,\langle s_1^7,0.5\rangle\}$,
$m_2^{7}=\{\langle s_1^7,1\rangle\}$, $m_3^{7}=\{\langle s_1^7,0.5\rangle,\langle s_2^7,0.5\rangle\}$, $m_4^{7}=\{\langle s_2^7,1\rangle\}$, $m_5^{7}=\{\langle s_2^7,0.5\rangle,\langle s_3^7,0.5\rangle\}$, $m_6^{7}=\{\langle s_3^7,1\rangle\}$, $m_7^{7}=\{\langle s_3^7,0.5\rangle,\langle s_4^7,0.5\rangle\}$, $m_8^{7}=\{\langle s_4^7,1\rangle\}$, $m_9^{7}=\{\langle s_4^7,0.5\rangle,\langle s_5^7,0.5\rangle\}$, $m_{10}^{7}=\{\langle s_5^7,1\rangle\}$, $m_{11}^{7}=\{\langle s_5^7,0.5\rangle,\langle s_6^7,0.5\rangle\}$, $m_{12}^{7}=\{\langle s_6^7,1\rangle\}$. Moreover, the weight vector $\omega=(0,0,0,0.3,0,0,0.5,0,0,0.2,0,0,0)^{\rm T}$.

By (\ref{mi1}), it can be derived a linguistic distribution assessment on $S^7$ as $\{\langle s_1^7,0.15\rangle, \langle s_2^7,0.15\rangle,\langle s_3^7,0.5\rangle,\langle s_4^7,0.1\rangle,$\\$\langle s_5^7,0.1\rangle\}$.

In a similar manner, the linguistic distribution assessment $\{\langle s_1^7,0.25\rangle,\langle s_2^7,0.3\rangle,\langle s_3^7,0.45\rangle\}$ can be transformed into a linguistic distribution assessment on $S^5$ as $\{\langle s_0^5,0.0833\rangle, \langle s_1^5,0.3667\rangle,\langle s_2^5,0.55\rangle\}$.

Based on the aforementioned transformation procedures, we achieve the transformation of linguistic distribution assessments between any two linguistic scales. The remaining of this paper uses these procedures to solve large-scale MAGDM problems with multi-granular linguistic information.

\section{Large-scale linguistic MAGDM based on multi-granular linguistic distribution assessments}\label{sec:5}

In this section, an approach for linguistic large-scale MAGDM based on multi-granular linguistic distribution assessments is presented, which is suitable to deal with LGDM problems. The first novelty of the proposed approach is the use of linguistic distribution assessments to represent the assessments of the group, which keeps the maximum information elicited by decision makers of the group in initial stages of the decision process. Another novelty is that the proposed approach allows the use of multi-granular linguistic information, which provides a flexible way for decision makers with different background and knowledge to express their assessment information. First of all, the formulation of the linguistic large-scale MAGDM problem is introduced.

\subsection{Formulation of the large-scale linguistic MAGDM problem}
For the convenience of description, let $I=\{1,2,\ldots,n\}$, $J=\{1,2,\ldots,m\}$, $L=\{1,2,\ldots,q\}$ and $H=\{1,2,\ldots,r\}$. Consider the following linguistic large-scale MAGDM problem. Let $G=\{G_1,G_2,\ldots,G_n\}$ be a finite set of alternatives, $C=\{C_1,C_2,\ldots,C_m\}$ be the set of attributes, $D=\{d_1,d_2,\ldots,d_q\}$ be the set of a large group of decision makers. The weighting vector of the attributes is $w=(w_1,w_2,\ldots,w_m)^{\rm T}$, where $0 \leqslant w_j \leqslant 1,j\in J, \sum\limits_{j = 1}^m {{w_j}}  = 1$. In the decision making process, the decision makers provide their assessments for each alternative with respect to each attribute using linguistic terms.

To make it more convenient for decision makers to express their assessments over alternatives, multi-granular linguistic term sets are allowed in our MAGDM problem. Let $S=\{S^{g(1)},S^{g(2)},\ldots,S^{g(r)}\}$ be the linguistic term sets to be used by the decision makers, where $S^{g(h)}=\{s_0^{g(h)},s_1^{g(h)},\ldots, s_{g(h)-1}^{g(h)}\}$ is a linguistic term set with a granularity of $S^{g(h)}$, $h\in H$. During the decision process, each decision maker elicits his/her linguistic preferences in only one linguistic term set for his/her assessments. The more knowledge has the decision maker about the problem the more granularity. Conversely, the less knowledge the less granularity. Therefore, the set of decision makers can be divided into $r$ groups according to the linguistic term sets used. For convenience, let $D=\{D_1,D_2,\ldots,D_r\}$, where $D_h$ is the set of decision makers who select the linguistic term set $S^{g(h)}$, $h\in H$. Moreover, the assessment of the $i$th alternative with respect to the $j$th attribute provided by the $l$th decision maker is denoted by $x_{ij}^{l}$, then $x^l_{ij}\in S^{g(h)}$, $i\in I$, $j\in J$, $d_l\in D_h$, $h\in H$. The  decision makers assessments are summarized in Table \ref{tab:1}. The GDM problem must obtain a solution of the best alternative through fusing the information provided by the decision makers.

\begin{table}
\caption{Decision information of the decision makers}\label{tab:1}
\centering
\renewcommand{\arraystretch}{1.5}
\begin{tabular}{c|ccc|ccc|c|ccc}
\hline
\multirow{2}{*}{\centering Alternatives} &  \multicolumn{10}{c}{Attributes} \\\cline{2-11}
&\multicolumn{3}{c|}{$C_1$}& \multicolumn{3}{c|}{$C_2$}&{$\ldots$}&\multicolumn{3}{c}{$C_m$}\\\hline
$G_1$ & $x_{11}^1$ & $\ldots$ & $x_{11}^q$ & $x_{12}^1$ & $\ldots$ & $x_{12}^q$ & $\ldots$  & $x_{1m}^1$ & $\ldots$ & $x_{1m}^q$\\
$G_2$ & $x_{21}^1$ & $\ldots$ & $x_{21}^q$ & $x_{22}^1$ & $\ldots$ & $x_{22}^q$ & $\ldots$  & $x_{2m}^1$ & $\ldots$ & $x_{2m}^q$\\
$\ldots$ & \multicolumn{3}{c|}{$\ldots$} & \multicolumn{3}{c|}{$\ldots$} & $\ldots$ & \multicolumn{3}{c}{$\ldots$} \\
$G_n$ & $x_{n1}^1$ & $\ldots$ & $x_{n1}^q$ & $x_{n2}^1$ & $\ldots$ & $x_{n2}^q$ & $\ldots$  & $x_{nm}^1$ & $\ldots$ & $x_{nm}^q$\\
\hline
\end{tabular}
\end{table}

\subsection{The proposed MAGDM approach}

Here, it is proposed an approach for solving MAGDM problems dealing with multi-granular linguistic information modeled by linguistic distribution assessments. This decision process applies a multi-step aggregation method to the linguistic distribution assessments, by unifying them. Subsequently the weights of the attributes are determined for aggregating the attribute values and ranking the alternatives, eventually the collective assessments are represented in an easy understanding way. These steps are further detailed below.

\subsubsection{Representing decision makers' assessments by linguistic distribution assessments}\label{repre}

To keep the maximum information elicited by decision makers of the group in initial stages of the decision process, it is used linguistic distribution assessments to represent the linguistic information. As multi-granular linguistic distribution assessments are elicited by different decision makers, firstly collective assessments  over each alternative with respect to each attribute assessed in the same linguistic term set are computed.

Let $z_{ij}^h=\{\langle s_k^{g(h)},\beta_{ij,k}^h\rangle|k=0,1,\ldots,g(h)-1\}$ denote the linguistic distribution assessment on the $i$th alternative with respect to the $j$th attribute from the decision makers using the linguistic term set $S^{g(h)}$, $i\in I$, $j\in J$, $h\in H$, then the following two cases are considered.

\textbf{a) The decision makers are of equal importance.} In this case,
\bea\label{tran1}
\beta_{ij,k}^h=\dfrac{\#\{l|x_{ij}^{l}=s_k^{g(h)},l\in L\}}{\#{D_h}}, i\in I,\ j\in J,\ k=0,1,\ldots,g(h)-1, h\in H.
\eea
where $\#(\cdot)$ is the cardinality of the set.

Accordingly, the weight of the decision makers who utilize the linguistic term set $S^{g(h)}$ is obtained as $\omega_h={\#{D_h}}/{q}$, $h\in H$.

\textbf{b) The decision makers are of unequal importance.} Let $\lambda=(\lambda_1,\lambda_2,\ldots,\lambda_q)^{\rm T}$ be the weighting vector of the decision makers, where $0 \leqslant \lambda_l \leqslant 1,\ l\in  L,\ \sum\limits_{l = 1}^q {{\lambda_l}}  = 1$, then we have
\bea\label{tran2}
\beta_{ij,k}^h=\dfrac{\sum\nolimits_{l\in P_{ij}^k}\lambda_l}{\sum\nolimits_{d_l\in D_h}\lambda_l},
\eea
where $P_{ij,k}^h=\{l|x_{ij}^{l}=s_k^{g(h)}, l\in L\}$, $i\in I$, $j\in J$, $k=0,1,\ldots,g(h)-1$, $h\in H$.

Similarly, the weight of the decision makers who utilize the linguistic term set $S^{g(h)}$ is obtained as $\omega_h=\sum\nolimits_{d_l\in D_h}\lambda_l$, $h\in H$.

It is easy to verify that $z_{ij}^h$ derived by (\ref{tran1}) and (\ref{tran2}) are linguistic distribution assessments on $S^{g(h)}$, $i\in I$, $j\in J$, $h\in H$. A simple example to demonstrate this step is described below.
\begin{example}
Assume that five decision makers want to evaluate a new product by considering three attributes, including safety, cost and technical performance. The first two decision makers provide his linguistic assessments over the product using a linguistic term set $S^5=\{s_0^5,s_1^5,\ldots,s_4^5\}$ and the other three decision makers use a linguistic term set $S^7=\{s_0^7,s_1^7,\ldots,s_6^7\}$, as demonstrated in Table \ref{tab:2}.
\end{example}

\begin{table}[htbp]
\caption{Assessments of the new product}\label{tab:2}
\centering
\renewcommand{\arraystretch}{1.5}
\begin{tabular}{c|ccccc|ccccc|ccccc}
\hline
\multirow{2}{*}{\centering Alternatives} &  \multicolumn{15}{c}{Attributes} \\\cline{2-16}
&\multicolumn{5}{c|}{$C_1$: Safety}& \multicolumn{5}{c|}{$C_2$: Cost}&\multicolumn{5}{c}{$C_3$: Tech. P.}\\\hline
$G_1$ & $s_4^5$ & $s_3^5$ & $s_5^7$& $s_6^7$ & $s_6^7$ & $s_1^5$ & $s_2^5$ & $s_3^7$& $s_3^7$ & $s_4^7$  & $s_1^5$ & $s_1^5$ & $s_3^7$& $s_2^7$ & $s_2^7$\\
\hline
\end{tabular}
\end{table}

Let $S=\{S^5,S^7\}$ be the set of linguistic domains. If the three decision makers are of equal importance, then $\beta_{11,0}^1=\beta_{11,1}^1=\beta_{11,2}^1=0$, $\beta_{11,3}^1=\frac{\#\{2\}}{2}=0.5$, $\beta_{11,4}^1=\frac{\#\{1\}}{2}=0.5$. Hence, the collective assessment over $G_1$ with respect to $C_1$ from decision makers using $S^5$ can be denoted by a linguistic distribution assessment $z_{11}^1=\{\langle s_3^5,0.5\rangle,\langle s_4^5, 0.5\rangle\}$. In a similar manner, the collective assessment over $G_1$ with respect to $C_1$ from decision makers using $S^7$ is denoted by $z_{11}^2=\{\langle s_5^7,0.333\rangle,\langle s_6^7, 0.667\rangle\}$. The collective assessments over $G_1$ with respect to all the  attributes are showed in Table \ref{tab:3}.

\begin{table}[htbp]
\caption{Linguistic distribution assessments with equal importance}\label{tab:3}
\centering
\renewcommand{\arraystretch}{1.5}
\begin{tabular}{c|c|c|c}
\hline
\multirow{2}{*}{\centering $z_{ij}^h$} &  \multicolumn{3}{c}{Attributes} \\\cline{2-4}
&{$C_1$: Safety}& {$C_2$: Cost}&{$C_3$: Tech. P.}\\\hline
$z_{1j}^1$& $\{\langle s_3^5,0.5\rangle,\langle s_4^5, 0.5\rangle\}$& $\{\langle s_1^5,0.5\rangle,\langle s_2^5, 0.5\rangle\}$ & $\{\langle s_1^5,1\rangle\}$\\
$z_{1j}^2$&$\{\langle s_5^7,0.333\rangle,\langle s_6^7, 0.667\rangle\}$ & $\{\langle s_3^7,0.667\rangle,\langle s_4^7, 0.333\rangle\}$ & $\{\langle s_2^7,0.667\rangle,\langle s_3^7, 0.333\rangle\}$\\
\hline
\end{tabular}
\end{table}

If the weighting vector of the three decision makers is $\lambda=(0.2,0.3,0.2,0.15,0.15)^{\rm T}$, then $P_{11,0}^1=P_{11,1}^1=P_{11,2}^1=\phi$, $P_{11,3}^1=\{2\}$, $P_{11,4}^{1}=\{1\}$. It follows that $\beta_{11,0}^1=\beta_{11,1}^1=\beta_{11,2}^1=0$, $\beta_{11,3}^1=\lambda_2/(\lambda_1+\lambda_2)=0.6$, $\beta_{11,4}^1=\lambda_1/(\lambda_1+\lambda_2)=0.4$. Therefore, the the group's assessments over $G_1$ with respect to $C_1$  from decision makers using $S^5$ can be denoted by a linguistic distribution assessment $\{\langle s_3^5,0.4\rangle,\langle s_4^5, 0.6\rangle\}$. Similarly, the collective assessment over $G_1$ with respect to $C_1$ from decision makers using $S^7$ is denoted by $z_{11}^2=\{\langle s_5^7,0.4\rangle,\langle s_6^7, 0.6\rangle\}$. The collective assessments over $G_1$ with respect to all the attributes are showed in Table \ref{tab:4}.

\begin{table}[htbp]
\caption{Linguistic distribution assessments with unequal importance}\label{tab:4}
\centering
\renewcommand{\arraystretch}{1.5}
\begin{tabular}{c|c|c|c}
\hline
\multirow{2}{*}{\centering $z_{ij}^h$} &  \multicolumn{3}{c}{Attributes} \\\cline{2-4}
&{$C_1$: Safety}& {$C_2$: Cost}&{$C_3$: Tech. P.}\\\hline
$z_{1j}^1$& $\{\langle s_3^5,0.6\rangle,\langle s_4^5, 0.4\rangle\}$& $\{\langle s_1^5,0.4\rangle,\langle s_2^5, 0.6\rangle\}$ & $\{\langle s_1^5,1\rangle\}$\\
$z_{1j}^2$&$\{\langle s_5^7,0.4\rangle,\langle s_6^7, 0.6\rangle\}$ & $\{\langle s_3^7,0.7\rangle,\langle s_4^7, 0.3\rangle\}$ & $\{\langle s_2^7,0.6\rangle,\langle s_3^7, 0.4\rangle\}$\\
\hline
\end{tabular}
\end{table}

\subsubsection{Unifying the multi-granular linguistic distribution assessments}
Through a transformation process, the group's linguistic assessments over the alternatives with respect to each attribute can be denoted as $r$ decision matrices, whose elements are multi-granular linguistic distribution assessments, i.e.
\bea\label{matrix}
Z^{h}=(z_{ij}^{h})_{n\times m}=\left(
  \begin{array}{cccc}
 z_{11}^h &  z_{12}^h & \ldots  & z_{1m}^h \\
 z_{21}^h &  z_{21}^h & \ldots  & z_{2m}^1\\
 {\vdots} & {\vdots} & \vdots & {\vdots} \\
 z_{n1}^h &  z_{n2}^h & \ldots & z_{nm}^h\\
  \end{array}
\right), h\in H
\eea

To fuse these multi-granular linguistic distribution assessments and derive a collective opinion over each alternative, the procedures proposed in Section \ref{sec:4} are utilized. The granularity of the new linguistic term set $S^{g(h^*)}$ is calculated as
\bea\label{lcm1}
g(h^*)=LCM(g(1)-1,g(2)-1,\ldots,g(r)-1)+1.
\eea

By (\ref{multi2}), each $z_{ij}^h$ is transformed into a linguistic distribution assessment on $S^{g(h^*)}$ as $z_{ij}^{h'}=\{\langle s_k^{g(h^*)},\gamma_{ij,k}^h\rangle|k=0,1,\ldots,g(h^*)-1\}$, and
\ben\label{eqn:24}
\gamma_{ij,k}^h=\left\{\begin{array}{l}
\begin{split}
&\beta_{ij,l(h,k)}^h &{\rm if}\  l(h,k)&\in \{0,1,\ldots,g(h)-1\};\\
&0  &{\rm if}\  l(h,k)&\notin \{0,1,\ldots,g(h)-1\},\
\end{split}
\end{array}\right.
\een
where $l(h,k)=\dfrac{k*(g(h)-1)}{g(j^*)-1}$, $h\in H$, $k=0,1,\ldots,g(h^*)-1$.

\subsubsection{Aggregating the unified decision matrices}

Now, all the elements of the $r$ decision matrices are transformed into linguistic distribution assessments over the linguistic term set $S^{g(h^*)}$. By applying the DAWA operator, the collective assessments of the group on each alternative with respect to each attribute can be calculated, which are also linguistic distribution assessments.

 Let $z_{ij}=\{\langle s_k^{g(h^*)},\gamma_{ij,k}\rangle|k=0,1,\ldots,g(h^*)-1\}$ denote the collective assessment on the $i$th alternative with respect to the $j$th attribute, then
\bea
\gamma_{ij,k}=\sum\limits_{h=1}^{r} \omega_h \gamma_{ij,k}^h,\ i\in I,\ j\in J,\ k=0,1,\ldots,g(h^*)-1,
\eea
 where $\omega_h$ is the weight of the decision makers who select the linguistic term set $S^{g(h)}$, $h\in H$.

\subsubsection{Determining the weights of the attributes} Once the collective opinions with alternatives have been obtained, a collective decision matrix must be computed. It is then necessary to aggregate the attribute values to obtain the collective assessment of each alternative. Before the aggregation process, the weight of the attributes should be determined. In this paper, the maximum deviation approach \cite{Wang98see} is used to determine the weights in case they are not known as a priori.

The basic idea of the maximum deviation approach \cite{Wang98see,Wu07fss} consists of if an attribute makes the collective values among all the alternatives have obvious differences, then it plays an important role in choosing the best alternative. From the view of ranking alternatives, an attribute which has similar attribute values among alternatives should be assigned a small weight; otherwise, the attribute which has larger deviations among attribute values should be given a higher weight. Based on this idea, it is developed an approach to determine the weights of the attributes for decision making with linguistic distribution assessments.

By the DAWA operator, the collective assessment of each alternative can be denoted by $z_i=\{\langle s_k^{g(i^*)},\gamma_{i,k}\rangle|$\\$k=0,1,\ldots,g(h^*)-1\}$, where
\bea\label{aggregation}
\gamma_{i,k}=\sum\limits_{j=1}^{m}w_j\gamma_{ij,k},\ i\in I,\ k=0,1,\ldots,g(h^*)-1.
\eea

For the $j$th attribute, the deviation among all the alternatives is denoted by
\bea
\begin{aligned}
V_j(w)&=\sum\limits_{i=1}^{n}\sum\limits_{l=1}^{n}d(z_i,z_l)=\dfrac{1}{g(h^*)-1}\sum\limits_{i=1}^{n}\sum\limits_{l=1}^{n}\left|\sum\limits_{k=0}^{g(h^*)-1}k(\gamma_{i,k}-\gamma_{j,k})\right|\\
&=\dfrac{1}{g(h^*)-1}\sum\limits_{i=1}^{n}\sum\limits_{l=1}^{n}\left|\sum\limits_{k=0}^{g(h^*)-1}k(\gamma_{ij,k}-\gamma_{lj,k})\right|w_j,\ j\in J.
\end{aligned}
\eea

Therefore, the deviation among all the alternatives with respect to all the attributes is calculated
\bea
V(w)=\sum\limits_{j=1}^{m}V_j(w)=\dfrac{1}{g(h^*)-1}\sum\limits_{j=1}^{m}w_j\sum\limits_{i=1}^{n}\sum\limits_{l=1}^{n}\left|\sum\limits_{k=0}^{g(h^*)-1}k(\gamma_{ij,k}-\gamma_{lj,k})\right|.
\eea

Based on the maximum deviation approach, the following model is established to derive the weights of the attributes:
\ben\label{M-1}\tag{M-1}
\begin{split}
&\max\quad V(w)=\dfrac{1}{g(h^*)-1}\sum\limits_{j=1}^{m}w_j\sum\limits_{i=1}^{n}\sum\limits_{l=1}^{n}\left|\sum\limits_{k=0}^{g(h^*)-1}k(\gamma_{ij,k}-\gamma_{lj,k})\right|\\
&\begin{split}
{\text {s.t.}}\qquad &\sum\limits_{j=1}^m{w_j}^2=1\\
&w_j\geqslant 0, j\in J.
\end{split}
\end{split}
\een

By solving the model (\ref{M-1}) and normalizing the weighting vector, the weight of each attribute is derived by
\bea\label{weight}
w_j=\dfrac{\sum\limits_{i=1}^{n}\sum\limits_{l=1}^{n}\left|\sum\limits_{k=0}^{g(h^*)-1}k(\gamma_{ij,k}-\gamma_{lj,k})\right|}
{\sum\limits_{j=1}^m\sum\limits_{i=1}^{n}\sum\limits_{l=1}^{n}\left|\sum\limits_{k=0}^{g(h^*)-1}k(\gamma_{ij,k}-\gamma_{lj,k})\right|}
, j\in J.
\eea

If the weight information of attributes is partly known (please refer to the five cases in \cite{Xu08kbs,Zhang12kbs}), the following optimization model is established to derive the weights of the attributes:
\ben\label{M-2}\tag{M-2}
\begin{split}
&\max\quad V(w)=\dfrac{1}{g(h^*)-1}\sum\limits_{j=1}^{m}w_j\sum\limits_{i=1}^{n}\sum\limits_{l=1}^{n}\left|\sum\limits_{k=0}^{g(h^*)-1}k(\gamma_{ij,k}-\gamma_{lj,k})\right|\\
&\begin{split}
{\text {s.t.}}\qquad &\sum\limits_{j=1}^m{w_j}=1\\
& (w_1,w_2,\ldots,w_m) \in \Omega\\
&w_j\geqslant 0, j\in J,
\end{split}
\end{split}
\een
where $\Omega$ is the weighting vector space constructed by the partly known weight information.

By solving the model (\ref{M-2}), the weighting vector can also be obtained.

\subsubsection{Aggregating the attribute values and ranking the alternatives}
Once the weights of the attributes are determined, the collective assessment of each alternative can be calculated by aggregating the attribute values using (\ref{aggregation}). The collective assessments are denoted by $z_i=\{\langle s_k^{g(h^*)},\gamma_{i,k}\rangle|k=0,1,\ldots,g(h^*)-1\}$, $i\in I$.

For each $z_i$, the expectation values and the inaccuracy function values are calculated by Definitions \ref{def-4} and \ref{inacc} as
\bea
E(z_i)=\sum\limits_{k=0}^{g(h^*)-1}k\gamma_{i,k},\ i\in I
\eea
and
\bea
T(z_i)=-\sum\limits_{k=0}^{g(h^*)-1}\gamma_{i,k}\log_2 \gamma_{i,k},\ i\in I.
\eea

Based on the values of $E(z_i)$ and $T(z_i)$, the ranking of the alternatives can be derived by Definition \ref{def-6}. According to the ranking, the best alternative can be obtained.

\subsubsection{Representing the collective assessments}
By aggregating the attribute values, the collective assessment of each alternative is derived which is a linguistic distribution assessment on the linguistic term set $S^{g(i^*)}$. For such linguistic distribution assessments, it is hard for decision makers to understand the collective assessment of each alternative, since the linguistic distribution assessments are not defined on their initial linguistic term sets. To provide interpretable final linguistic results for decision makers, it is necessary to transform the derived collective assessments into linguistic distribution assessments using the initial linguistic term sets. Stage 2 in subsection \ref{sec:4-2} is utilized to achieve this goal.
By doing so, the decision makers can clearly know the overall assessments of the alternatives using their own linguistic term set as well as the proportion of each linguistic term.

To summarize, the procedures of the proposed MAGDM approach are given below, which it is also depicted in Fig. \ref{fig2}.

\begin{figure*}
  \centering
    \includegraphics[scale=0.75]{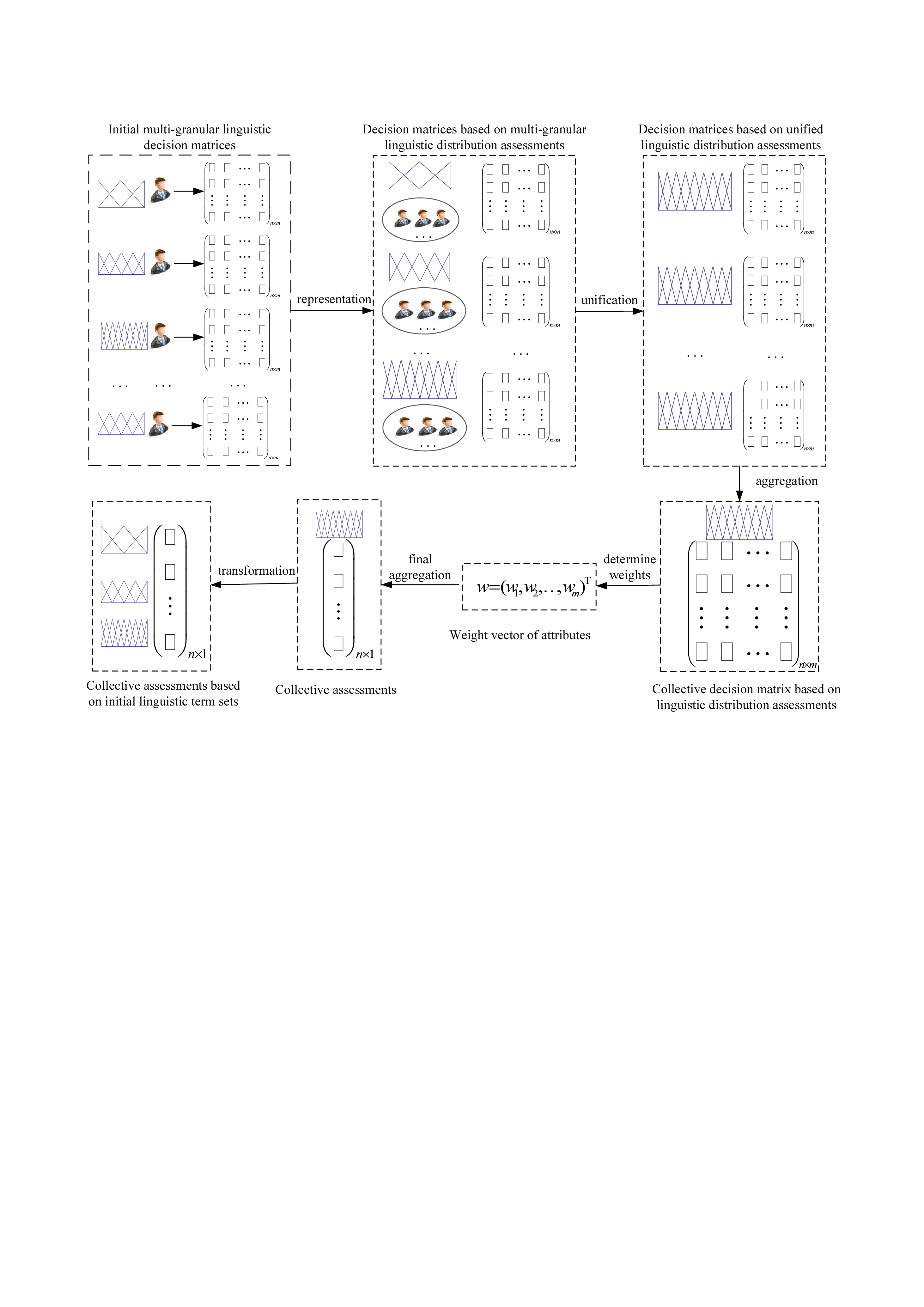}\\
      \caption{Flowchart of the decision making approach}\label{fig2}
\end{figure*}

\begin{enumerate}[\indent \bf Step 1:]
\item Gather the decision makers' assessments and represent the assessments from the decision makers who select the same linguistic term set using linguistic distribution assessments by (\ref{tran1}) or (\ref{tran2}). In this way, $r$ decision matrices are derived by (\ref{matrix}). Also, determine the weights of each decision matrix as $\omega=(\omega_1,\omega_2,\ldots,\omega_r)^{\rm T}$.
\item Calculate the granularity of the new linguistic term set $S^{g(h^*)}$ by (\ref{lcm1}) and use (\ref{eqn:24}) to unify the multi-granular linguistic distribution assessments.
\item Aggregate the $r$ unified decision matrices to derive the collective assessments on each alternative with respect to each attribute by (\ref{aggregation}).
\item If the weights of the attributes are completely known, go to Step 5; If the weights of the attributes are completely unknown, calculate the weights of the attributes by (\ref{weight}); If the weights of the attributes are partly known, solve the optimization model (\ref{M-2}) to derive the weights of attributes.
\item Aggregate the attribute values for  each alternative to derive the collective assessments by (\ref{aggregation}). Afterwards, calculate the expectation values and inaccuracy function values for each alternative and output the ranking of the alternatives by Definition \ref{def-6}.
\item If the decision makers want to know the collective assessment using the initial linguistic term sets, represent the collective assessments of each alternative using the initial linguistic term sets by Stage 2 in subsection \ref{sec:4-2}.
\end{enumerate}

\section{An illustrative example}\label{sec:6}

In this section, an example for talent recruitment is used to demonstrate the proposed MAGDM approach. A university in China was intended to recruit a dean for the School of Business. The recruitment process was as follows. First, the university released an opening recruitment announcement on the website. Any people who satisfied the basic recruitment conditions could apply for the position using the online application system before the deadline. After receiving applications from candidates at home and abroad, the staffs of the personnel department made a strict selection by checking the application documents. Finally, four candidates ($G_1$, $G_2$, $G_3$, $G_4$) entered the interview for further selection. To make the final selection as fair as possible, a committee which was composed of 24 members from the academic board of the university was established. After making face-to-face interviews with the four candidates, each committee member was asked to provide their assessments over the four candidates with respect to the following four criteria: $C_1$: Academic background and influence; $C_2$: Leadership; $C_3$: Research and teaching experiences; $C_4$: International exchange and cooperation.

To facilitate the evaluation process, the committee members were allowed to provide their assessments using multi-granular linguistic term sets, i.e. each committee member could select a linguistic term set for assessment according to his/her interest. The linguistic term sets used were the following ones: $S^{g(1)}=S^5=\{s_0^5: \text{poor}, s_1^5:\text{slightly poor}, s_2^5:\text{fair},s_3^5:\text{slightly good},s_4^5: \text{good}\}$; $S^{g(2)}=S^7=\{s_0^7: \text{very poor}, s_1^7:\text{poor}, s_2^7:\text{slightly poor},s_3^7:\text{fair},s_4^5: \text{slightly good},s_5^7:\text{good},s_6^7:\text{very good}\}$; $S^{g(3)}=S^9=\{s_0^9: \text{extremely poor},s_1^9:\text{very poor}, s_2^9:\text{poor}, s_3^9:\text{slightly poor},s_4^9:\text{fair},s_5^9: \text{slightly good},s_6^9:\text{good},s_7^9:\text{very good},s_8^9: \text{extremely good}\}$.

Finally, the number of the committee members who selected $S^5$, $S^7$ and $S^9$ for assessment were 10, 8 and 6, respectively. The proposed MAGDM approach is then employed to select the most appropriate candidate.

\textbf{Step 1:} After gathering the committee members' multi-granular linguistic assessment, we conduct the initial fusion of information and represent the collective assessment over each candidate with respect to each criteria from the committee members who select the same linguistic term set using multi-granular linguistic distribution assessments (see the procedures in subsection \ref{repre}).
The results are showed in Tables \ref{tab:5}-\ref{tab:7}.

\begin{table}[htbp]\small
\caption{Linguistic distribution assessments using $S^5$}\label{tab:5}
\centering
\renewcommand{\arraystretch}{1.5}
\begin{tabular}{lp{3.6cm}p{3.6cm}p{3.6cm}p{3.6cm}}
  \toprule
$O_1$ & $C_1$ & $C_2$ & $C_3$ & $C_4$   \\\midrule
  $G_1$&  $\{\langle s_3^5,0.4\rangle,\langle s_4^5,0.6\rangle  \}$&$\{\langle s_2^5,0.2\rangle,\langle s_3^5,0.8\rangle  \}$&$\{\langle s_3^5,0.8\rangle,\langle s_4^5,0.2\rangle \}$&$\{\langle s_3^5,1\rangle  \}$\\
  $G_2$&  $\{\langle s_2^5,0.4\rangle,\langle s_3^5,0.2\rangle$, $\langle s_4^5,0.4\rangle \}$&$\{\langle s_3^5,0.8\rangle,\langle s_4^5,0.2\rangle  \}$&$\{\langle s_2^5,0.2\rangle,\langle s_3^5,0.4\rangle$, $\langle s_4^5,0.4\rangle \}$&$\{\langle s_2^5,0.6\rangle, \langle s_3^5,0.4\rangle \}$\\
  $G_3$&  $\{\langle s_1^5,0.2\rangle,\langle s_2^5,0.4\rangle$, $\langle s_3^5,0.4\rangle  \}$&$\{\langle s_2^5,0.6\rangle,\langle s_3^5,0.4\rangle  \}$&$\{\langle s_4^5,1\rangle  \}$& $\{\langle s_1^5,0.4\rangle,\langle s_2^5,0.4\rangle $, $\langle s_3^5,0.2\rangle\}$\\
  $G_4$&  $\{\langle s_4^5,1\rangle \}$&$\{\langle s_3^5,0.6\rangle,\langle s_4^5,0.4\rangle  \}$&$\{\langle s_1^5,0.4\rangle,\langle s_2^5,0.4\rangle$, $\langle s_3^5,0.2\rangle \}$&$\{\langle s_3^5,0.2\rangle,\langle s_4^5,0.8\rangle  \}$\\
 \bottomrule
\end{tabular}
\end{table}

\begin{table}[htbp]\small
\caption{Linguistic distribution assessments using $S^7$}\label{tab:6}
\centering
\renewcommand{\arraystretch}{1.5}
\begin{tabular}{lp{3.6cm}p{3.6cm}p{3.6cm}p{3.6cm}}
  \toprule
$O_2$ & $C_1$ & $C_2$ & $C_3$ & $C_4$   \\\midrule
  $G_1$&  $\{\langle s_3^7,0.25\rangle,\langle s_4^7,0.5\rangle$, $\langle s_5^7,0.25\rangle \}$&$\{\langle s_1^7,0.25\rangle,\langle s_2^7,0.25\rangle$, $\langle s_4^7,0.5\rangle   \}$&$\{\langle s_4^7,0.5\rangle,\langle s_6^7,0.5\rangle \}$&$\{\langle s_3^7,0.25\rangle ,\langle s_4^7,0.75\rangle  \}$\\
  $G_2$&  $\{\langle s_3^7,0.5\rangle ,\langle s_4^7,0.5\rangle \}$&$\{\langle s_3^7,0.25\rangle,\langle s_4^7,0.25\rangle$, $\langle s_5^7,0.5\rangle  \}$&$\{\langle s_3^7,0.25\rangle,\langle s_4^7,0.5\rangle$, $\langle s_5^7,0.25\rangle \}$&$\{\langle s_5^7,0.5\rangle, \langle s_6^7,0.5\rangle \}$\\
  $G_3$&  $\{\langle s_3^7,0.5\rangle,\langle s_4^7,0.5\rangle\}$&$\{\langle s_3^7,0.25\rangle,\langle s_4^7,0.75\rangle  \}$&$\{\langle s_4^7,0.25\rangle ,\langle s_5^7,0.25\rangle$, $\langle s_6^7,0.5\rangle \}$& $\{\langle s_0^7,0.25\rangle,\langle s_2^7,0.75\rangle\}$\\
  $G_4$&  $\{\langle s_4^7,0.5\rangle ,\langle s_5^7,0.25\rangle$, $\langle s_6^7,0.25\rangle \}$&$\{\langle s_5^7,0.5\rangle,\langle s_6^7,0.5\rangle  \}$&$\{\langle s_2^7,0.25\rangle,\langle s_3^7,0.25\rangle$, $\langle s_4^7,0.5\rangle \}$&$\{\langle s_5^7,0.5\rangle,\langle s_6^7,0.5\rangle  \}$\\
 \bottomrule
\end{tabular}
\end{table}

\begin{table}[htbp]\small
\caption{Linguistic distribution assessments using $S^9$}\label{tab:7}
\centering
\renewcommand{\arraystretch}{1.5}\small
\begin{tabular}{lp{3.6cm}p{3.6cm}p{3.6cm}p{3.6cm}}
  \toprule
$O_3$ & $C_1$ & $C_2$ & $C_3$ & $C_4$   \\\midrule
  $G_1$&  $\{\langle s_5^9,0.333\rangle,\langle s_6^9,0.167\rangle$, $\langle s_7^9,0.5\rangle \}$&$\{\langle s_4^9,0.167\rangle,\langle s_5^9, 0.833\rangle\}$&$\{\langle s_6^9,0.5\rangle,\langle s_7^9,0.167\rangle$, $\langle s_8^9,0.333\rangle  \}$&$\{\langle s_5^9,0.5\rangle ,\langle s_6^9,0.167\rangle $, $\langle s_7^9,0.333\rangle  \}$\\
  $G_2$&  $\{\langle s_3^9,0.333\rangle ,\langle s_5^9,0.167\rangle$, $\langle s_6^9,0.5\rangle  \}$&$\{\langle s_6^9,0.5\rangle,\langle s_7^9,0.5\rangle\}$&$\{\langle s_4^9,0.333\rangle,\langle s_6^9,0.5\rangle $, $\langle s_7^9,0.167\rangle \}$&$\{\langle s_2^9,0.333\rangle, \langle s_4^9,0.167\rangle$, $\langle s_5^9,0.5\rangle \}$\\
  $G_3$&  $\{\langle s_4^9,0.333\rangle,\langle s_5^9,0.5\rangle$, $\langle s_6^9,0.167\rangle\}$&$\{\langle s_6^9,0.333\rangle,\langle s_7^9,0.667\rangle  \}$&$\{\langle s_7^9,0.5\rangle ,\langle s_8^9,0.5\rangle \}$& $\{\langle s_2^9,0.333\rangle,\langle s_3^9,0.333\rangle$, $\langle s_4^9,0.333\rangle\}$\\
  $G_4$&  $\{\langle s_6^9,0.333\rangle ,\langle s_7^9,0.667\rangle\}$&$\{\langle s_7^9,0.333\rangle, \langle s_8^9,0.667\rangle  \}$&$\{\langle s_3^9,0.333\rangle,\langle s_4^9,0.5\rangle$, $\langle s_5^9,0.167\rangle \}$&$\{\langle s_5^9,0.5\rangle,\langle s_6^9,0.167\rangle$, $\langle s_8^9,0.333\rangle   \}$\\
 \bottomrule
\end{tabular}
\end{table}

As the committee members were of equal importance, the weights of the committee members who selected $S^5$, $S^7$ and $S^9$ for assessment were 5/12, 1/3 and 1/4, respectively, i.e. $\omega=(5/12, 1/3 , 1/4)^{\rm T}$.

\textbf{Step 2:} The granularity of the new linguistic term set $g(h^*)$ is calculated. Since $g(1)=5$, $g(2)=7$, $g(3)=9$, we have $g(h^*)=LCM(4,6,8)+1=25$. By (\ref{eqn:24}), the linguistic distribution assessments in Tables \ref{tab:5} - \ref{tab:7} are transformed into linguistic distribution assessments on $S^{g(h^*)}$, as showed in Tables \ref{tab:8} - \ref{tab:10}.

\begin{table}[htbp]
\caption{Individual decision matrix $Z^{1'}=(z_{ij}^{1'})_{4\times 4}$}\label{tab:8}
\centering
\renewcommand{\arraystretch}{1.5}\small
\begin{tabular}{lp{3.6cm}p{3.6cm}p{3.6cm}p{3.6cm}}
  \toprule
$O_1$ & $C_1$ & $C_2$ & $C_3$ & $C_4$   \\\midrule
  $G_1$&  $\{\langle s_{18}^{25},0.4\rangle,\langle s_{24}^{25},0.6\rangle  \}$&$\{\langle s_{12}^{25},0.2\rangle,\langle s_{18}^{25},0.8\rangle  \}$&$\{\langle s_{18}^{25},0.8\rangle,\langle s_{24}^{25},0.2\rangle \}$&$\{\langle s_{18}^{25},1\rangle  \}$\\
  $G_2$&  $\{\langle s_{12}^{25},0.4\rangle,\langle s_{18}^{25},0.2\rangle $, $\langle s_{24}^{25},0.4\rangle \}$&$\{\langle s_{18}^{25},0.8\rangle,\langle s_{24}^{25},0.2\rangle  \}$&$\{\langle s_{12}^{25},0.2\rangle,\langle s_{18}^{25},0.4\rangle$, $\langle s_{24}^{25},0.4\rangle \}$&$\{\langle s_{12}^{25},0.6\rangle, \langle s_{18}^{25},0.4\rangle \}$\\
  $G_3$&  $\{\langle s_6^{25},0.2\rangle,\langle s_{12}^{25},0.4\rangle$, $\langle s_{18}^{25},0.4\rangle  \}$&$\{\langle s_{12}^{25},0.6\rangle,\langle s_{18}^{25},0.4\rangle  \}$&$\{\langle s_{24}^{25},1\rangle  \}$& $\{\langle s_{6}^{25},0.4\rangle,\langle s_{12}^{25},0.4\rangle$, $\langle s_{18}^{25},0.2\rangle\}$\\
  $G_4$&  $\{\langle s_{24}^{25},1\rangle \}$&$\{\langle s_{18}^{25},0.6\rangle,\langle s_{24}^{25},0.4\rangle  \}$&$\{\langle s_6^{25},0.4\rangle,\langle s_{12}^{25},0.4\rangle$, $\langle s_{18}^{25},0.2\rangle \}$&$\{\langle s_{18}^{25},0.2\rangle,\langle s_{24}^{25},0.8\rangle  \}$\\
 \bottomrule
\end{tabular}
\end{table}

\begin{table}[htbp]
\caption{Individual decision matrix $Z^{2'}=(z_{ij}^{2'})_{4\times 4}$}\label{tab:9}
\centering
\renewcommand{\arraystretch}{1.5}\small
\begin{tabular}{lp{3.6cm}p{3.6cm}p{3.6cm}p{3.6cm}}
  \toprule
$O_2$ & $C_1$ & $C_2$ & $C_3$ & $C_4$   \\\midrule
  $G_1$&  $\{\langle s_{12}^{25},0.25\rangle,\langle s_{16}^{25},0.5\rangle$, $\langle s_{20}^{25},0.25\rangle \}$&$\{\langle s_{4}^{25},0.25\rangle,\langle s_{8}^{25},0.25\rangle$, $\langle s_{16}^{25},0.5\rangle   \}$&$\{\langle s_{16}^{25},0.5\rangle,\langle s_{24}^{25},0.5\rangle \}$&$\{\langle s_{12}^{25},0.25\rangle ,\langle s_{16}^{25},0.75\rangle  \}$\\
  $G_2$&  $\{\langle s_{12}^{25},0.5\rangle ,\langle s_{16}^{25},0.5\rangle \}$&$\{\langle s_{12}^{25},0.25\rangle,\langle s_{16}^{25},0.25\rangle$, $\langle s_{20}^{25},0.5\rangle  \}$&$\{\langle s_{12}^{25},0.25\rangle,\langle s_{16}^{25},0.5\rangle$, $\langle s_{20}^{25},0.25\rangle \}$&$\{\langle s_{20}^{25},0.5\rangle, \langle s_{24}^{25},0.5\rangle \}$\\
  $G_3$&  $\{\langle s_{12}^{25},0.5\rangle,\langle s_{16}^{25},0.5\rangle\}$&$\{\langle s_{12}^{25},0.25\rangle,\langle s_{16}^{25},0.75\rangle  \}$&$\{\langle s_{16}^{25},0.25\rangle ,\langle s_{20}^{25},0.25\rangle$, $\langle s_{24}^{25},0.5\rangle \}$& $\{\langle s_{0}^{25},0.25\rangle,\langle s_{8}^{25},0.75\rangle\}$\\
  $G_4$&  $\{\langle s_{16}^{25},0.5\rangle ,\langle s_{20}^{25},0.25\rangle$, $\langle s_{24}^{25},0.25\rangle \}$&$\{\langle s_{20}^{25},0.5\rangle,\langle s_{24}^{25},0.5\rangle  \}$&$\{\langle s_{8}^{25},0.25\rangle,\langle s_{12}^{25},0.25\rangle$, $\langle s_{16}^{25},0.5\rangle \}$&$\{\langle s_{20}^{25},0.5\rangle,\langle s_{24}^{25},0.5\rangle  \}$\\
 \bottomrule
\end{tabular}
\end{table}

\begin{table}[htbp]\small
\caption{Individual decision matrix $Z^{3'}=(z_{ij}^{3'})_{4\times 4}$}\label{tab:10}
\centering
\renewcommand{\arraystretch}{1.5}
\begin{tabular}{lp{3.6cm}p{3.6cm}p{3.6cm}p{3.6cm}}
  \toprule
$O_3$ & $C_1$ & $C_2$ & $C_3$ & $C_4$   \\\midrule
  $G_1$&  $\{\langle s_{15}^{25},0.333\rangle,\langle s_{18}^{25},0.167\rangle$, $\langle s_{21}^{25},0.5\rangle \}$&$\{\langle s_{12}^{25},0.167\rangle,\langle s_{15}^{25}, 0.833\rangle\}$&$\{\langle s_{18}^{25},0.5\rangle,\langle s_{21}^{25},0.167\rangle$, $\langle s_{24}^{25},0.333\rangle  \}$&$\{\langle s_{15}^{25},0.5\rangle ,\langle s_{18}^{25},0.167\rangle $, $\langle s_{21}^{25},0.333\rangle  \}$\\
  $G_2$&  $\{\langle s_{9}^{25},0.333\rangle ,\langle s_{15}^{25},0.167\rangle$, $\langle s_{18}^{25},0.5\rangle  \}$&$\{\langle s_{18}^{25},0.5\rangle,\langle s_{21}^{25},0.5\rangle\}$&$\{\langle s_{12}^{25},0.333\rangle,\langle s_{18}^{25},0.5\rangle $, $\langle s_{21}^{25},0.167\rangle \}$&$\{\langle s_{6}^{25},0.333\rangle, \langle s_{12}^{25},0.167\rangle$, $\langle s_{15}^{25},0.5\rangle \}$\\
  $G_3$&  $\{\langle s_{12}^{25},0.333\rangle,\langle s_{15}^{25},0.5\rangle$, $\langle s_{18}^{25},0.167\rangle\}$&$\{\langle s_{18}^{25},0.333\rangle,\langle s_{21}^{25},0.667\rangle  \}$&$\{\langle s_{21}^{25},0.5\rangle ,\langle s_{24}^{25},0.5\rangle \}$& $\{\langle s_{6}^{25},0.333\rangle,\langle s_{9}^{25},0.333\rangle$, $\langle s_{12}^{25},0.333\rangle\}$\\
  $G_4$&  $\{\langle s_{18}^{25},0.333\rangle ,\langle s_{21}^{25},0.667\rangle\}$&$\{\langle s_{21}^{25},0.333\rangle, \langle s_{24}^{25},0.667\rangle  \}$&$\{\langle s_{9}^{25},0.333\rangle,\langle s_{12}^{25},0.5\rangle$, $\langle s_{15}^{25},0.167\rangle \}$&$\{\langle s_{15}^{25},0.5\rangle,\langle s_{18}^{25},0.167\rangle$, $\langle s_{24}^{25},0.333\rangle   \}$\\
 \bottomrule
\end{tabular}
\end{table}

\begin{table}[htbp]\small
\caption{Collective decision matrix $Z=(z_{ij})_{4\times 4}$}\label{tab:11}
\centering\small
\renewcommand{\arraystretch}{1.5}
\begin{tabular}{lp{3.6cm}p{3.6cm}p{3.6cm}p{3.6cm}}
  \toprule
 & $C_1$ & $C_2$ & $C_3$ & $C_4$   \\\midrule
  $G_1$&  $\{\langle s_{12}^{25},0.083\rangle,\langle s_{15}^{25},0.083\rangle$, $\langle s_{16}^{25},0.167\rangle,\langle s_{18}^{25},0.209\rangle$, $\langle s_{20}^{25},0.083\rangle,\langle s_{21}^{25},0.125\rangle$, $\langle s_{24}^{25},0.25\rangle \}$ & $\{\langle s_{4}^{25},0.083\rangle,\langle s_{8}^{25},0.083\rangle$, $\langle s_{12}^{25},0.125\rangle,\langle s_{15}^{25},0.209\rangle$, $\langle s_{16}^{25},0.167\rangle,\langle s_{18}^{25},0.333\rangle$ & $\{\langle s_{16}^{25},0.167\rangle,\langle s_{18}^{25},0.458\rangle$, $\langle s_{21}^{25},0.042\rangle,\langle s_{24}^{25},0.333\rangle \}$ & $\{\langle s_{12}^{25},0.083\rangle,\langle s_{15}^{25},0.125\rangle$, $\langle s_{16}^{25},0.25\rangle,\langle s_{18}^{25},0.459\rangle$, $\langle s_{21}^{25},0.083\rangle \}$\\
  $G_2$&  $\{\langle s_{9}^{25},0.083\rangle,\langle s_{12}^{25},0.333\rangle$, $\langle s_{15}^{25},0.042\rangle,\langle s_{16}^{25},0.167\rangle$, $\langle s_{18}^{25},0.208\rangle,\langle s_{24}^{25},0.167\rangle$ & $\{\langle s_{12}^{25},0.083\rangle,\langle s_{16}^{25},0.083\rangle$, $\langle s_{18}^{25},0.459\rangle,\langle s_{20}^{25},0.167\rangle$, $\langle s_{21}^{25},0.125\rangle, \langle s_{24}^{25},0.083\rangle \}$ & $\{\langle s_{12}^{25},0.25\rangle,\langle s_{16}^{25},0.167\rangle$, $\langle s_{18}^{25},0.291\rangle,\langle s_{20}^{25},0.083\rangle$, $\langle s_{21}^{25},0.042\rangle, \langle s_{24}^{25},0.167\rangle \}$ & $\{\langle s_{6}^{25},0.083\rangle,\langle s_{12}^{25},0.291\rangle$, $\langle s_{15}^{25},0.125\rangle,\langle s_{18}^{25},0.167\rangle$, $\langle s_{20}^{25},0.167\rangle, \langle s_{24}^{25},0.167\rangle \}$\\
  $G_3$&  $\{\langle s_{6}^{25},0.083\rangle,\langle s_{12}^{25},0.417\rangle$, $\langle s_{15}^{25},0.125\rangle,\langle s_{16}^{25},0.167\rangle$, $\langle s_{18}^{25},0.208\rangle \}$ & $\{\langle s_{12}^{25},0.333\rangle,\langle s_{16}^{25},0.25\rangle$, $\langle s_{18}^{25},0.25\rangle,\langle s_{21}^{25},0.167\rangle \}$ & $\{\langle s_{16}^{25},0.083\rangle,\langle s_{20}^{25},0.083\rangle$, $\langle s_{21}^{25},0.125\rangle,\langle s_{24}^{25},0.709\rangle \}$
   & $\{\langle s_{0}^{25},0.083\rangle,\langle s_{6}^{25},0.25\rangle$, $\langle s_{8}^{25},0.25\rangle,\langle s_{9}^{25},0.083\rangle$, $\langle s_{12}^{25},0.25\rangle, \langle s_{18}^{25},0.084\rangle \}$\\
  $G_4$&  $\{\langle s_{16}^{25},0.167\rangle,\langle s_{18}^{25},0.083\rangle$, $\langle s_{20}^{25},0.083\rangle,\langle s_{21}^{25},0.167\rangle$, $\langle s_{24}^{25},0.5\rangle \}$ & $\{\langle s_{18}^{25},0.25\rangle,\langle s_{20}^{25},0.167\rangle$, $\langle s_{21}^{25},0.083\rangle,\langle s_{24}^{25},0.5\rangle \}$ & $\{\langle s_{6}^{25},0.167\rangle,\langle s_{8}^{25},0.083\rangle$, $\langle s_{9}^{25},0.083\rangle,\langle s_{12}^{25},0.375\rangle$, $\langle s_{15}^{25},0.042\rangle,\langle s_{16}^{25},0.167\rangle$, $\langle s_{18}^{25},0.083\rangle \}$& $\{\langle s_{15}^{25},0.125\rangle,\langle s_{18}^{25},0.125\rangle$, $\langle s_{20}^{25},0.167\rangle,\langle s_{24}^{25},0.583\rangle \}$\\
 \bottomrule
\end{tabular}
\end{table}

\textbf{Step 3:} The three decision matrices are aggregated using the DAWA operator with a weighting vector $\omega=(5/12, 1/3 , 1/4)^{\rm T}$. The aggregated decision matrix is demonstrated in Table \ref{tab:11}.

\textbf{Step 4:} As the weights of the criteria are unknown, we use (\ref{weight}) to determine the weights of the criteria. The weighting vector is derived as $w=(0.2079,0.1968,0.2827, 0.3126)^{\rm T}$.

\textbf{Step 5:} By applying the DAWA operator,  the collective assessments of the four candidates ($z_i$) are calculated and showed in Table \ref{tab:12}.

\begin{table}[htbp]\small
\caption{Collective assessments of the four candidates}\label{tab:12}
\centering
\renewcommand{\arraystretch}{1.5}\small
\begin{tabular}{p{1cm}p{15.1cm}}
  \toprule
$G_i$  & Collective assessments \\\midrule
  $G_1$&  $\{\langle s_{4}^{25},0.016\rangle,\langle s_{8}^{25},0.016\rangle, \langle s_{12}^{25},0.068\rangle,\langle s_{15}^{25},0.098\rangle, \langle s_{16}^{25},0.193\rangle,\langle s_{18}^{25},0.382\rangle, \langle s_{20}^{25},0.017\rangle, \langle s_{21}^{25},0.064\rangle$, $\langle s_{24}^{25},0.146\rangle\}$\\
  $G_2$&  $\{\langle s_{6}^{25},0.026\rangle,\langle s_{9}^{25},0.017\rangle, \langle s_{12}^{25},0.248\rangle,\langle s_{15}^{25},0.048\rangle, \langle s_{16}^{25},0.098\rangle,\langle s_{18}^{25},0.268\rangle, \langle s_{20}^{25},0.109\rangle, \langle s_{21}^{25},0.036\rangle$, $\langle s_{24}^{25},0.150\rangle \}$\\
  $G_3$&  $\{\langle s_{0}^{25},0.026\rangle,\langle s_{6}^{25},0.096\rangle, \langle s_{8}^{25},0.078\rangle,\langle s_{9}^{25},0.026\rangle, \langle s_{12}^{25},0.230\rangle,\langle s_{15}^{25},0.026\rangle, \langle s_{16}^{25},0.107\rangle, \langle s_{18}^{25},0.119\rangle$, $\langle s_{20}^{25},0.024\rangle, \langle s_{21}^{25},0.068\rangle, \langle s_{24}^{25},0.200\rangle \}$\\
  $G_4$&  $\{\langle s_{6}^{25},0.047\rangle,\langle s_{8}^{25},0.024\rangle, \langle s_{9}^{25},0.023\rangle,\langle s_{12}^{25},0.106\rangle, \langle s_{15}^{25},0.051\rangle,\langle s_{16}^{25},0.082\rangle, \langle s_{18}^{25},0.129\rangle, \langle s_{20}^{25},0.102\rangle$, $\langle s_{21}^{25},0.051\rangle, \langle s_{24}^{25},0.385\rangle \}$\\
 \bottomrule
\end{tabular}
\end{table}

Afterwards, it is computed the expectation values of the four candidates' collective assessments. For each candidate, it is obtained
$E(z_1)=(s_{18}^{25},-0.38)$, $E(z_2)=(s_{17}^{25},-0.07)$, $E(z_3)=(s_{15}^{25},0.15)$, $E(z_4)=(s_{18}^{25},0.70)$, which results in a ranking $G_{4}\succ G_{1}\succ G_{2}\succ G_{3}$. As a result, the best candidate is $G_{4}$.

\textbf{Step 6:} The committee members want to know the collective assessments of each candidate, so the procedures of Stage 2 in subsection IV-B are used to transform each $z_i$ into linguistic distribution assessments of the initial linguistic term sets. The results are demonstrated in Tables \ref{tab:13}-\ref{tab:15}. From Tables \ref{tab:13} - \ref{tab:15}, we can obverse the collective assessments of the candidates. For instance, from Table \ref{tab:13} we can find that the collective assessment of $G_2$ is mainly about $s_2^5$ and $s_3^5$, while $G_4$ is about $s_3^5$ and $s_4^5$. Besides, we can also obtain the proportion distribution of the linguistic terms, which reflects the tendencies of the assessments. Therefore, the use of linguistic distribution assessments can provide more information about the assessments over alternatives.

\begin{table}[htbp]\small
\caption{Collective assessments of the four candidates using linguistic term set $S^5$}\label{tab:13}
\centering
\renewcommand{\arraystretch}{1.5}\small
\begin{tabular}{p{1.5cm}p{10cm}}
  \toprule
$G_i$  & Collective assessments  \\\midrule
  $G_1$&  $\{\langle s_{0}^{5},0.006\rangle,\langle s_{1}^{5},0.022\rangle, \langle s_{2}^{5},0.186\rangle,\langle s_{3}^{5},0.602\rangle, \langle s_{4}^{5},0.184\rangle\}$\\
  $G_2$&  $\{\langle s_{1}^{5},0.035\rangle, \langle s_{2}^{5},0.313\rangle,\langle s_{3}^{5},0.448\rangle, \langle s_{4}^{5},0.204\rangle\}$\\
  $G_3$&  $\{\langle s_{0}^{5},0.026\rangle,\langle s_{1}^{5},0.161\rangle, \langle s_{2}^{5},0.318\rangle,\langle s_{3}^{5},0.253\rangle, \langle s_{4}^{5},0.242\rangle\}$\\
  $G_4$&  $\{\langle s_{1}^{5},0.075\rangle, \langle s_{2}^{5},0.178\rangle,\langle s_{3}^{5},0.303\rangle, \langle s_{4}^{5},0.444\rangle\}$\\
 \bottomrule
\end{tabular}
\end{table}

\begin{table}[htbp]\small
\caption{Collective assessments of the four candidates using linguistic term set $S^7$}\label{tab:14}
\centering
\renewcommand{\arraystretch}{1.5}\small
\begin{tabular}{p{1cm}p{13cm}}
  \toprule
$G_i$  & Collective assessments  \\\midrule
  $G_1$&  $\{\langle s_{1}^{7},0.016\rangle,\langle s_{2}^{7},0.017\rangle, \langle s_{3}^{7},0.092\rangle,\langle s_{4}^{7},0.457\rangle, \langle s_{5}^{7},0.256\rangle,\langle s_{6}^{7},0.162\rangle\}$\\
  $G_2$&  $\{\langle s_{1}^{7},0.013\rangle,\{\langle s_{2}^{7},0.026\rangle,\langle s_{3}^{7},0.264\rangle, \langle s_{4}^{7},0.268\rangle,\langle s_{5}^{7},0.270\rangle, \langle s_{6}^{7},0.159\rangle\}$\\
  $G_3$&  $\{\langle s_{0}^{7},0.026\rangle,\{\langle s_{1}^{7},0.048\rangle,\langle s_{2}^{7},0.145\rangle, \langle s_{3}^{7},0.244\rangle,\langle s_{4}^{7},0.186\rangle, \langle s_{5}^{7},0.134\rangle,\langle s_{6}^{7},0.217\rangle\}$\\
  $G_4$&  $\{\langle s_{1}^{7},0.024\rangle,\langle s_{2}^{7},0.065\rangle, \langle s_{3}^{7},0.125\rangle,\langle s_{4}^{7},0.184\rangle, \langle s_{5}^{7},0.205\rangle,\langle s_{6}^{7},0.397\rangle\}$\\
 \bottomrule
\end{tabular}
\end{table}

\begin{table}[htbp]\small
\caption{Collective assessments of the four candidates using linguistic term set $S^9$}\label{tab:15}
\centering
\renewcommand{\arraystretch}{1.5}\small
\begin{tabular}{p{1cm}p{14cm}}
  \toprule
$G_i$  & Collective assessments  \\\midrule
  $G_1$&  $\{\langle s_{1}^{9},0.011\rangle,\langle s_{2}^{9},0.011\rangle, \langle s_{3}^{9},0.011\rangle,\langle s_{4}^{9},0.068\rangle, \langle s_{5}^{9},0.226\rangle,\langle s_{6}^{9},0.452\rangle, \langle s_{7}^{9},0.075\rangle, \langle s_{8}^{9},0.146\rangle\}$\\
  $G_2$&   $\{\langle s_{2}^{9},0.026\rangle, \langle s_{3}^{9},0.017\rangle,\langle s_{4}^{9},0.248\rangle, \langle s_{5}^{9},0.113\rangle,\langle s_{6}^{9},0.337\rangle, \langle s_{7}^{9},0.109\rangle, \langle s_{8}^{9},0.150\rangle\}$\\
  $G_3$&  $\{\langle s_{0}^{9},0.026\rangle,\langle s_{2}^{9},0.122\rangle, \langle s_{3}^{9},0.078\rangle,\langle s_{4}^{9},0.2370\rangle, \langle s_{5}^{9},0.098\rangle,\langle s_{6}^{9},0.162\rangle, \langle s_{7}^{9},0.084\rangle, \langle s_{8}^{9},0.200\rangle\}$\\
  $G_4$&  $\{\langle s_{2}^{9},0.055\rangle, \langle s_{3}^{9},0.039\rangle,\langle s_{4}^{9},0.106\rangle, \langle s_{5}^{9},0.105\rangle,\langle s_{6}^{9},0.191\rangle, \langle s_{7}^{9},0.119\rangle, \langle s_{8}^{9},0.385\rangle\}$\\
 \bottomrule
\end{tabular}
\end{table}

\section{Conclusions}\label{sec:7}

In this paper, a new linguistic computational model has been developed to deal with multi-granular linguistic distribution assessments for its application to large-scale MAGDM problems with linguistic information.

First, different distance measures and a new ranking method are developed to improve the management of linguistic distribution assessments.

Second, the relationship between a linguistic 2-tuple and a linguistic distribution assessment is investigated. To manage multi-granular linguistic distribution assessments, a new linguistic computational model is then developed based on the ELH model and the transformation formulae between a linguistic 2-tuple and a linguistic distribution assessment, which not only can be used to fuse multi-granular linguistic distribution assessments, but also can provide interpretable aggregate linguistic results to decision makers.

Third, an approach to large-scale MAGDM with multi-granular linguistic information is proposed based on the new linguistic computational model. The proposed approach uses linguistic distribution assessments to represent decision makers' assessments, which keeps the maximum information elicited by decision makers of the group in initial stages of the decision process and can provide more information about the collective assessments over alternatives.

Our future research will study the consensus reaching process for large-scale MAGDM problems with multi-granular linguistic information based on the developed model. Moreover, the hesitant fuzzy linguistic term sets proposed by Rodr\'{\i}guez \emph{et al.} \cite{Rodriguez12tfs} have received more and more attention from scholars \cite{Liu14ins,Liao14ins,weiC2015}. It will also be interesting to analyze the relationship between linguistic distribution assessments and the hesitant fuzzy linguistic term sets in the future.

\bibliographystyle{IEEEtran}
\bibliography{ieee}

\end{document}